\renewcommand{\orcid}[1]{\href{https://orcid.org/#1}{\textcolor[HTML]{A6CE39}{orcid.org/#1}}}
\setlist[enumerate]{leftmargin=.5in}
\setlist[itemize]{leftmargin=.5in}
\crefname{hypothesis}{Hypothesis}{Hypotheses}
\title{Leveraging joint sparsity in hierarchical Bayesian learning
\thanks{\monthyeardate\today 
\corresponding{Jan Glaubitz} 
}}
\author{ 
Jan Glaubitz\thanks{Department of Aeronautics and Astronautics, MIT, Cambridge, MA 02139, USA (\email{glaubitz@mit.edu}, \orcid{0000-0002-3434-5563})}
\and 
Anne Gelb\thanks{Department of Mathematics, Dartmouth College, Hanover, NH 03755, USA (\email{Anne.E.Gelb@Dartmouth.edu}, \orcid{0000-0002-9219-4572})}
}
\newcommand{\MAP}{\mathrm{MAP}}
\DeclareMathOperator{\diag}{diag}
\DeclareMathOperator*{\argmin}{arg\,min}
\DeclareMathOperator*{\kernel}{kernel}
\newcommand{\intd}{\, \mathrm{d}} 
\newcommand{\N}{\mathbb{N}}
\newcommand{\R}{\mathbb{R}} 
\newcommand{\C}{\mathbb{C}}
\begin{document}

\maketitle


\begin{abstract}
We present a hierarchical Bayesian learning approach to infer jointly sparse parameter vectors from multiple measurement vectors. 
Our model uses separate conditionally Gaussian priors for each parameter vector and common gamma-distributed hyper-parameters to enforce joint sparsity. 
The resulting joint-sparsity-promoting priors are combined with existing Bayesian inference methods to generate a new family of algorithms. 
Our numerical experiments, which include a multi-coil magnetic resonance imaging application, demonstrate that our new approach consistently outperforms commonly used hierarchical Bayesian methods. 

\end{abstract}

\begin{keywords}
	Multiple measurement vectors, 
  	joint sparsity,
	hierarchical Bayesian learning, 
	conditionally Gaussian priors, 
	(generalized) gamma hyper-priors
\end{keywords}

\begin{AMS}
	65F22, 
	62F15, 
	65K10, 
	68U10 
\end{AMS}

\begin{Code}
    \url{https://github.com/jglaubitz/LeveragingJointSparsity}
\end{Code}

\begin{DOI}
    \url{https://doi.org/10.1137/23M156255X}
\end{DOI}

\section{Introduction} 
\label{sec:introduction} 

Parameter estimation from observable measurements is of fundamental importance in science and engineering applications.
Multiple measurement vectors (MMVs) can often be obtained from various sources, each having distinct underlying parameter vectors due to differences in spatial or temporal conditions \cite{cotter2005sparse,wipf2007empirical,adcock2019joint}. 
This situation can be modeled as a set of linear inverse problems given by
\begin{equation}\label{eq:MMV_IP} 
	\mathbf{y}_l = F_l \mathbf{x}_l + \mathbf{e}_l, \quad l=1,\dots,L,
\end{equation}  
where $\mathbf{y}_1,\dots,\mathbf{y}_L$ are the available MMVs, $\mathbf{x}_1,\dots,\mathbf{x}_L$ represent the sought-after parameter vectors, $F_{1},\dots,F_{L}$ are explicitly known linear forward operators, and $\mathbf{e}_1,\dots,\mathbf{e}_L$ denote the unknown noise component. 
The linear forward operators are often poorly conditioned, and the measurements may be limited in number or resolution, and contaminated by noise, causing the set of inverse problems \cref{eq:MMV_IP} to be ill-posed.

A well-known effective strategy used to mitigate ill-posedness is to incorporate prior information regarding the unknown parameter vectors, 
and in this study, we assume that these parameter vectors exhibit {\em joint sparsity}. 
Specifically, we assume there exists a linear operator $R$ (e.g., a discrete gradient or wavelet transform) such that $R \mathbf{x}_1,\dots,R \mathbf{x}_L$ are sparse and have common support. 
For example, the parameter vectors could correspond to piecewise constant signals with the same interior edge locations but different values. 
\cref{fig:deb_signal1_intro,fig:deb_signal2_intro} depict this scenario for the first two of four jointly sparse piecewise constant signals.
Joint sparsity arises in various applications, including signal processing, source location, neuro-electromagnetic imaging, parallel MRI, hyper-spectral imaging, and SAR imaging. 
For further reading on this topic, see \cite{cotter2005sparse,wipf2007empirical,adcock2019joint,zhang2022empirical} and related references.

\begin{figure}[tb]
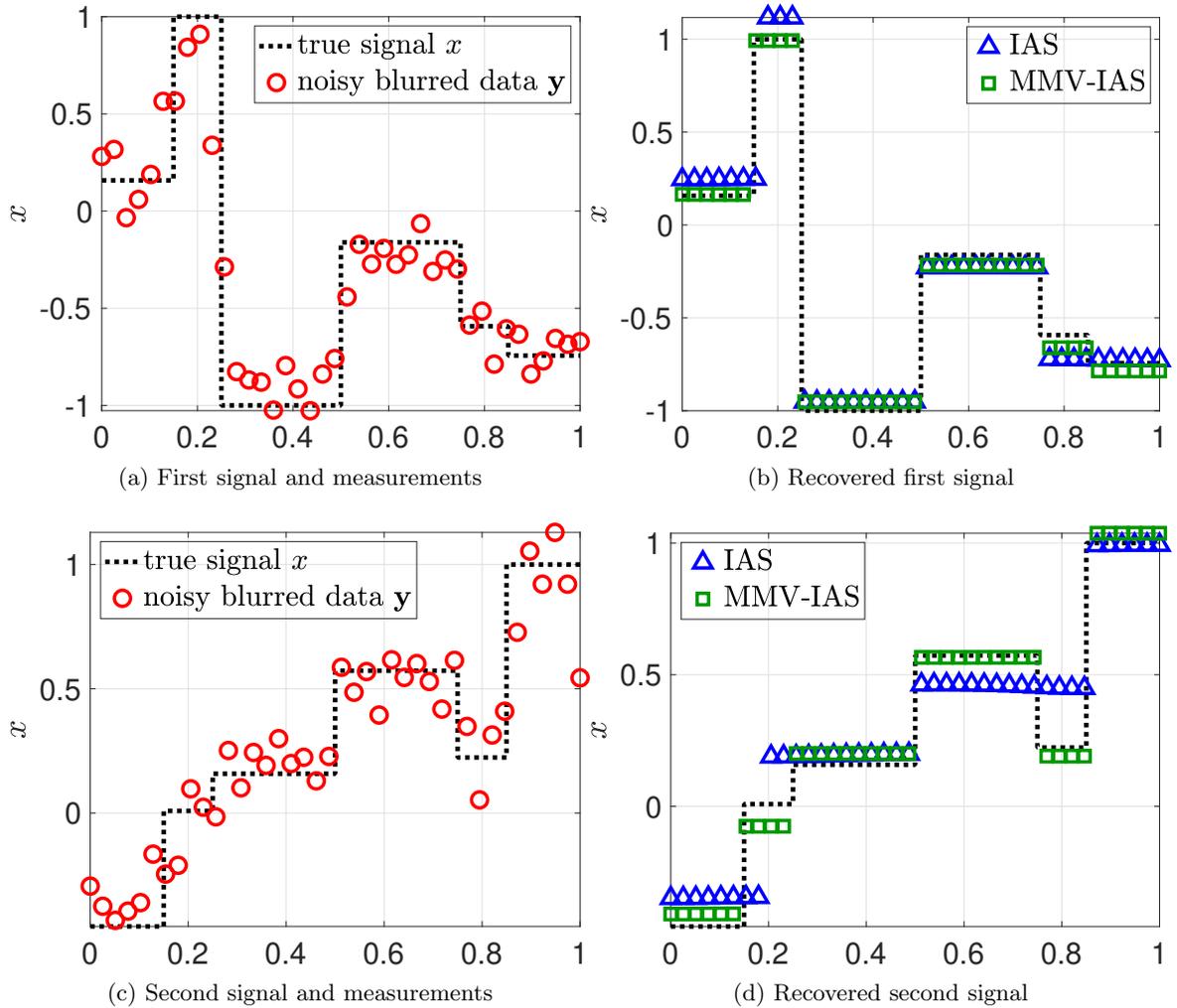

	\centering
  	\begin{subfigure}[b]{0.45\textwidth}
		\includegraphics[width=\textwidth]{%
      		figures/deb_signal1} 
    	\caption{First signal and measurements}
    	\label{fig:deb_signal1_intro}
  	\end{subfigure}%
  	\begin{subfigure}[b]{0.45\textwidth}
		\includegraphics[width=\textwidth]{%
      		figures/deb_signal1_IAS_rm1_L4} 
    	\caption{Recovered first signal}
    	\label{fig:deb_signal1_IAS_rm1_L4_intro}
  	\end{subfigure}%
	\\
	\begin{subfigure}[b]{0.45\textwidth}
		\includegraphics[width=\textwidth]{%
      		figures/deb_signal2} 
    	\caption{Second signal and measurements}
    	\label{fig:deb_signal2_intro}
  	\end{subfigure}%
  	\begin{subfigure}[b]{0.45\textwidth}
		\includegraphics[width=\textwidth]{%
      		figures/deb_signal2_IAS_rm1_L4} 
    	\caption{Recovered second signal}
    	\label{fig:deb_signal2_IAS_rm1_L4_intro}
  	\end{subfigure}%
  	\caption{ 
  	First column: The first two of four piecewise constant signals with a common edge profile and noisy blurred measurements. 
	Second column: Reconstructions of the signals using the existing IAS algorithm to separately recover them (blue triangles) and the proposed MMV-IAS algorithm to jointly recover them (green squares). 
	See \cref{sub:deblurring} for more details. 
  	}
  	\label{fig:deb_signal_intro}
\end{figure}  

\subsection*{Current methodology}

Various deterministic methods address the ill-posedness in the set of linear inverse problems \cref{eq:MMV_IP} by transforming it into a set of nearby regularized optimization problems. 
Under the joint sparsity assumption, established compressive sensing methods \cite{donoho2006compressed,eldar2012compressed,foucart2017mathematical} can be used to {\em individually} recover the desired parameter vectors. 
By leveraging their joint sparsity structure, the compressive sensing methods in \cite{cotter2005sparse,eldar2009robust,adcock2019joint}  {\em jointly} recover these vectors.
The approach in \cite{adcock2019joint} significantly enhanced the recovery process's robustness and accuracy. 
For more recent works in this area, see \cite{gelb2019reducing,scarnati2020accurate,xiao2022sequential,xiao2023sequential} and their references.

However, regularized inverse problems often face two significant challenges: (i) determining the appropriate regularization parameters and (ii) quantifying uncertainty in the recovered solution. 
Because the available measurements in \eqref{eq:MMV_IP} are often insufficient and noisy, it is essential to quantify the subsequent uncertainty in the parameters of interest. 
In particular, uncertainty in the parameters leads to uncertainty in predictions and decision-making. 

In this work we employ a hierarchical Bayesian approach \cite{kaipio2006statistical,calvetti2007introduction,stuart2010inverse} to solve the MMV inverse problem \cref{eq:MMV_IP}, 
with the parameters of interest and the measurements modeled as random variables. 
The sought-after posterior distribution for the parameters of interest is characterized using Bayes' theorem, which connects the posterior density to the prior and likelihood densities.
The prior encodes information available on the parameters of interest before any data are observed, while the likelihood density incorporates the data model and a stochastic description of measurements. 
A primary benefit of this framework is that it enables uncertainty quantification while avoiding the need for fine-tuning regularization parameters. 

One particularly effective class of priors for promoting sparsity is the conditional Gaussian prior. 
This choice has proven successful in various applications such as sparse basis selection \cite{tipping2001sparse,wipf2004sparse}, signal and image recovery \cite{chantas2006bayesian,babacan2010sparse,glaubitz2022generalized,xiao2022sequential}, and edge detection \cite{churchill2019detecting,xiao2023sequential2}. 
Additionally, conditional Gaussian priors are computationally convenient and lead to highly efficient inference algorithms, 
see \cite{calvetti2019hierachical,calvetti2020sparse,calvetti2020sparsity,vono2022high,glaubitz2022generalized} and references therein. 
Although existing sparsity-promoting hierarchical Bayesian algorithms can be used to infer the parameter vectors separately, such approaches do not exploit the \emph{joint sparsity} of the parameter vectors.

\subsection*{Our contribution}

We present a hierarchical Bayesian learning approach that leverages joint sparsity in multiple parameter vectors described by the MMV data model \cref{eq:MMV_IP}. 
Our approach utilizes separate priors for each parameter vector while sharing common hyper-parameters drawn from (generalized) gamma distributions, which capture the sparsity profile of the parameter vectors.
The advantage of using joint-sparsity-promoting priors is demonstrated in comparison to well-established sparsity-promoting algorithms, such as the generalized sparse Bayesian learning (GSBL) \cite{tipping2001sparse,wipf2004sparse,glaubitz2022generalized} and iterative alternating sequential (IAS) \cite{calvetti2019hierachical,calvetti2020sparse,calvetti2020sparsity} algorithms. 
Our results indicate that the proposed MMV-GSBL and MMV-IAS algorithms, which by design use joint-sparsity-promoting priors, outperform the existing methods.
\cref{fig:deb_signal_intro} compares the existing IAS algorithm with the proposed MMV-IAS algorithm to recover the first two out of four piecewise-constant signals with a common edge profile from noisy blurred data. 
Observe that the joint sparsity enhancement in either approach consistently results in superior signal recovery compared to reconstructing signals individually. 
In particular,  while the performance of the IAS and GSBL algorithms may vary depending on the specific problem and model parameters, incorporating joint sparsity consistently offers advantageous outcomes. 
Further numerical experiments demonstrate that the proposed method increases the robustness and accuracy of the recovered parameter vectors, better catches the sparsity profile encoded in the hyper-parameters, and reduces uncertainty. 
The findings of this research highlight the significant improvement in performance that can be achieved by exploiting joint sparsity in hierarchical Bayesian models. 
In particular, we demonstrate its potential application to parallel MRI. 
Additionally, we note that utilizing separate priors with shared hyper-parameters is not limited to conditionally Gaussian priors and that our approach can be adapted to other hierarchical prior models. 
Finally, our approach shares similarities with the one proposed in \cite{wipf2007empirical} for classical SBL that we discuss in \cref{sub:GSBL_connection}.

\subsection*{Outline} 

We present the joint-sparsity-promoting conditionally Gaussian priors and the resulting hierarchical Bayesian model in \Cref{sec:model}, with the Bayesian MAP estimation discussed in \Cref{sec:BI}. 
In \Cref{sec:analysis}, we analyze the new MMV-IAS algorithm and compare it to the existing IAS algorithm. 
\Cref{sec:GSBL} extends the idea of joint-sparsity-promoting priors to the GSBL framework to form the MMV-GSBL algorithm. 
Numerical experiments are showcased in \Cref{sec:numerics}, which include applications of the proposed MMV-IAS and -GSBL algorithms to parallel MRI. 
We summarize the work in \Cref{sec:summary}.

\subsection*{Notation} 

We use normal and boldface capital letters, such as $X$ and $\mathbf{X}$, to denote scalar- and vector-valued random variables, respectively. 
For a density $\pi$, we write $X \sim \pi$ when $X$ is distributed according to $\pi$. 
If $L \in \N$ and $\mathbf{X}_1,\dots,\mathbf{X}_L$ are random variables, then we denote their collection by $\mathbf{X}_{1:L} = (\mathbf{X}_1,\dots,\mathbf{X}_L)$. 
The same notation applies to dummy variables $\boldsymbol{x} \in \R^n$. 
 
\section{The joint hierarchical Bayesian model} 
\label{sec:model} 

We present the joint-sparsity promoting Bayesian model considered in this investigation. 
The conditionally Gaussian prior in \cref{sub:prior} is particularly important for developing our new method.

\subsection{The likelihood function} 
\label{sub:likelihood}

The likelihood density function models the connection between the parameter and measurement vectors. 
Consider the linear MMV data model \cref{eq:MMV_IP} with MMVs $\mathbf{y}_l \in \R^{M_l}$, known forward operators $F_l \in \R^{M_l \times N}$, desired parameter vectors $\mathbf{x}_l \in \R^{N}$, and additive Gaussian noise $\mathbf{e}_l \sim \mathcal{N}(\mathbf{0},\Sigma_l)$. 
Since $\Sigma_l$ is a symmetric positive definite (SPD) covariance matrix, there exists a Cholesky decomposition of the form $\Sigma_l = C_l C_l^T$ with invertible $C_l$. 
The noise can then be whitened by multiplying both sides of \cref{eq:MMV_IP} with $C_l^{-1}$ from the left-hand side so that we can assume $\Sigma_l = I$, where $I$ is the $M_l \times M_l$ identity matrix. 
The \emph{$l$th likelihood function} is then 
\begin{equation}\label{eq:likelihood} 
	\pi_{\mathbf{Y}_l | \mathbf{X}_l}( \mathbf{y}_l | \mathbf{x}_l ) 
        \propto \exp\left( -\frac{1}{2} \| F_l \mathbf{x}_l - \mathbf{y}_l \|_2^2 \right), 
        \quad l=1,\dots,L.
\end{equation} 
Assuming that $\mathbf{Y}_{1:L}$ are jointly independent conditioned on $\mathbf{X}_{1:L}$, the \emph{joint likelihood function} is   
\begin{equation}\label{eq:joint_likelihood} 
	\pi_{\mathbf{Y}_{1:L}|\mathbf{X}_{1:L}}(\mathbf{y}_{1:L}|\mathbf{x}_{1:L}) 
		= \prod_{l=1}^L \pi_{\mathbf{Y}_l | \mathbf{X}_l}( \mathbf{y}_l | \mathbf{x}_l ) 
		\propto \exp\left( -\frac{1}{2} \sum_{l=1}^L \| F_l \mathbf{x}_l - \mathbf{y}_l \|_2^2 \right).
\end{equation}
Note that \cref{eq:joint_likelihood} is a conditionally Gaussian density function, which is convenient for Bayesian inference.  A couple of remarks are in order.

\begin{remark}[Complex-valued forward operators] This framework also allows for complex-valued forward operators and observations. Specifically for $F \in \C^{M \times N}$, we can use the equivalent real-valued forward operator $[\operatorname{Re}(F);\operatorname{Im}(F)] \in \R^{2M \times N}$, where $\operatorname{Re}(F)$ and $\operatorname{Im}(F)$ denote the real and imaginary part of $F$.
\end{remark} 

\begin{remark}[Non-linear data models]
	For simplicity, we restrict our attention to linear data models. 
	However, the proposed approach can be extended to non-linear models using methods such as Kalman filtering \cite{evensen2009data,spantini2022coupling,kim2022hierarchical}. 
\end{remark}

\begin{remark}[Dependent measurement vectors]
	For simplicity, we have assumed that the MMVs $\mathbf{Y}_{1:L}$ are jointly independent conditioned on the parameter vectors $\mathbf{X}_{1:L}$. 
	This assumption facilitated the expression of the joint likelihood function as detailed in \cref{eq:joint_likelihood}. 
	However, it is important to recognize that in practical scenarios, the assumption of independence among measurement vectors might not hold due to inherent interdependencies. 
	To illustrate, consider the case where $\mathbf{y}_{1:L}|\mathbf{x}_{1:L} \sim \mathcal{N}( \mathbf{0} | \Sigma )$. 
	In this more general scenario, the joint likelihood function is 
	\begin{equation}
		\pi_{\mathbf{Y}_{1:L}|\mathbf{X}_{1:L}}(\mathbf{y}_{1:L}|\mathbf{x}_{1:L})
			\propto \exp\left( -\frac{1}{2} \norm{ \Sigma^{-1} \left( F \mathbf{x} - \mathbf{y} \right) }_2^2 \right),
	\end{equation}
	where $F = \diag( F_1, \dots, F_L )$, $\mathbf{x} = [\mathbf{x}_1, \dots, \mathbf{x}_L]^T$, and $\mathbf{y} = [\mathbf{y}_1, \dots, \mathbf{y}_L]^T$.
	We can again whiten the noise by computing the Cholesky decomposition $\Sigma = C C^T$ and multiplying $F$ and $\mathbf{y}$ by $C^{-1}$ from the left. 
	It is noteworthy that the resulting forward operator matrix might not maintain a block-diagonal form. 
	Consequently, the parameter vector updates, as discussed in \cref{sub:x_update}, no longer decouple. 
	In this case, the optimization problems in \cref{eq:x_update} transforms into 
	\begin{equation}\label{eq:x_update_general}
		\mathbf{x}_{1:L}
			= \argmin_{\mathbf{x}_1,\dots,\mathbf{x}L} \left\{ \, \norm{ F \mathbf{x} - \mathbf{y} }_2^2 + \sum_{l=1}^L  \| D{\boldsymbol{\theta}}^{-1/2} R \mathbf{x}_l \|_2^2 \, \right\}
	\end{equation}
	with $D_{\boldsymbol{\theta}} = \diag(\boldsymbol{\theta})$. 
	Notably, \cref{eq:x_update_general} poses a more computationally demanding problem compared to the original parallelizable optimization problems \cref{eq:x_update}.
\end{remark}

\subsection{The joint-sparsity promoting conditionally Gaussian prior}
\label{sub:prior}

The prior density models our prior belief about the desired parameter vectors $\mathbf{x}_{1:L}$. 
Here we assume that they are {\em jointly} sparse, i.e., there {exist a linear transform $R \in \R^{K \times N}$ such that $R \mathbf{x}_1, \dots, R \mathbf{x}_L$} are sparse and have the same support (the indices of their non-zero values are the same). 
We start by modeling the sparsity of $R \mathbf{x}_l$ in a probabilistic setting by choosing the \emph{$l$th prior} as the conditionally Gaussian density  
\begin{equation}\label{eq:prior_sparsity}
    \pi_{\mathbf{X}_l | \boldsymbol{\Theta}_l}( \mathbf{x}_l | \boldsymbol{\theta}_l ) 
        \propto \det( D_{\boldsymbol{\theta}_l} )^{-1/2} \exp\left( -\frac{1}{2} \| D_{\boldsymbol{\theta}_l}^{-1/2} {R} \mathbf{x}_l \|_2^2 \right), \quad l=1,\dots,L, 
\end{equation}
with hyper-parameter vector $\boldsymbol{\theta}_l = [(\theta_l)_1,\dots,(\theta_l)_K]$, covariance matrix $D_{\boldsymbol{\theta}_l} = \diag(\boldsymbol{\theta}_l)$, and unknown variance parameters $(\theta_l)_k > 0$ for $k=1,\dots,K$ and $l=1,\dots,L$.

\begin{remark}\label{rem:motivation_condGaussian}
	Following \cite{calvetti2007gaussian,glaubitz2022generalized}, the conditional Gaussian prior \cref{eq:prior_sparsity} can be motivated by its asymptotic behavior: 
	Assume that $(\theta_l)_1=\dots=(\theta_l)_K$, then \cref{eq:prior_sparsity} favors $\mathbf{x}_l$ for which $\| R \mathbf{x}_l \|_2$ is close to zero, since such an $\mathbf{x}_l$ has a higher probability. 
	For instance, when $R \mathbf{x}_l$ corresponds to the increments of $\mathbf{x}_l$, i.e., $\left[ R \mathbf{x}_l \right]_k = (x_l)_{k+1} - (x_l)_{k}$, then \cref{eq:prior_sparsity} with $(\theta_l)_1=\dots=(\theta_l)_K$ favors $\mathbf{x}_l$ to have little variation. 
	However, if one of the hyper-parameters, say $(\theta_l)_k$, is significantly larger than the others, a jump between $(x_l)_{k+1}$ and $(x_l)_{k}$ becomes more likely. 
	In this way, \cref{eq:prior_sparsity} promotes sparsity of $R \mathbf{x}_l$. 
	Furthermore, we can connect the support of $R \mathbf{x}_l$ to the hyper-parameters $(\theta_l)_1,\dots,(\theta_l)_K$. 
	In particular, we expect the support of $R \mathbf{x}_l$ to coincide with the hyper-parameters significantly larger than most others. 
\end{remark}

We next model $R \mathbf{x}_1,\dots,R \mathbf{x}_L$ having the same support. 
To this end, motivated by \cref{rem:motivation_condGaussian}, we connect the supports of $R \mathbf{x}_1,\dots,R \mathbf{x}_L$ to the hyper-parameter vectors, $\boldsymbol{\theta}_1,\dots,\boldsymbol{\theta}_L$, by assuming that $\boldsymbol{\theta}_1 = \dots = \boldsymbol{\theta}_L$. 
Denoting the common hyper-parameter vector as $\boldsymbol{\theta}$, \cref{eq:prior_sparsity} then reduces to  
\begin{equation}\label{eq:prior_joint_sparsity}
    \pi_{\mathbf{X}_l | \boldsymbol{\Theta}}( \mathbf{x}_l | \boldsymbol{\theta} ) 
        \propto \det( D_{\boldsymbol{\theta}} )^{-1/2} \exp\left( -\frac{1}{2} \| D_{\boldsymbol{\theta}}^{-1/2} R \mathbf{x}_l \|_2^2 \right), \quad l=1,\dots,L. 
\end{equation}
That is, the priors are now all conditioned on the {\em same} hyper-parameters. 
Finally, assuming the $\mathbf{X}_{1:L}$ are jointly independent conditioned on $\boldsymbol{\Theta}$, the \emph{joint prior} is 
\begin{equation}\label{eq:joint_prior} 
	\pi_{\mathbf{X}_{1:L} | \boldsymbol{\Theta}}( \mathbf{x}_{1:L} | \boldsymbol{\theta} ) 
		= \prod_{l=1}^L \pi_{\mathbf{X}_l | \boldsymbol{\Theta}}( \mathbf{x}_l | \boldsymbol{\theta} ) 
		\propto \det( D_{\boldsymbol{\theta}} )^{-L/2} \exp\left( -\frac{1}{2} \sum_{l=1}^L \| D_{\boldsymbol{\theta}}^{-1/2} R \mathbf{x}_l \|_2^2 \right). 
\end{equation}

\begin{remark}[Other sparsity-promoting hierarchical priors] 
	The conditionally Gaussian prior in \cref{eq:joint_prior} not only enforces joint sparsity but also enables convenient Bayesian inference due to its compatibility with the Gaussian likelihood \cref{eq:joint_likelihood}. 
	However, the joint-sparsity-promoting approach using a common hyper-parameter vector $\boldsymbol{\theta}$ can be extended to other hierarchical sparsity-promoting priors, such as horseshoe \cite{carvalho2009handling,uribe2022horseshoe} and neural network priors \cite{neal1996priors,asim2020invertible,li2021bayesian}. 
\end{remark}

\subsection{The generalized gamma hyper-prior} 
\label{sub:hyper}

The price to pay for the hierarchical joint prior model \cref{eq:joint_prior} is that we now need to estimate not only the parameter vectors $\mathbf{x}_{1:L}$ but also the common hyper-parameter vector $\boldsymbol{\theta}$. 
By Bayes' theorem, the \emph{joint posterior density} of $(\mathbf{X}_{1:L},\boldsymbol{\Theta})$ given $\mathbf{Y}_{1:L}$ is 
\begin{equation}\label{eq:posterior} 
	\pi_{ \mathbf{X}_{1:L}, \boldsymbol{\Theta} | \mathbf{Y}_{1:L} }( \mathbf{x}_{1:L}, \boldsymbol{\theta} | \mathbf{y}_{1:L} ) 
		\propto \pi_{ \mathbf{Y}_{1:L} | \mathbf{X}_{1:L} }( \mathbf{y}_{1:L} | \mathbf{x}_{1:L} ) \, 
			\pi_{ \mathbf{X}_{1:L} | \boldsymbol{\Theta} }( \mathbf{x}_{1:L} | \boldsymbol{\theta} ) \, 
			\pi_{ \boldsymbol{\Theta} }( \boldsymbol{\theta} ). 
\end{equation}
From \cref{rem:motivation_condGaussian}, it is evident that to promote sparsity of $R \mathbf{x}_1,\dots,R \mathbf{x}_L$ the hyper-prior $\pi_{ \boldsymbol{\Theta} }$ should favor small values of $\theta_1,\dots,\theta_K$ while allowing occasional large outliers for the conditionally Gaussian prior \cref{eq:prior_joint_sparsity}.
Following \cite{calvetti2020sparse,calvetti2020sparsity}, this can be achieved by treating $\theta_1,\dots,\theta_K$ as random variables with an uninformative generalized gamma density function
\begin{equation}\label{eq:hyper_priors} 
	\pi_{\boldsymbol{\Theta}}(\boldsymbol{\theta}) 
		= \prod_{k=1}^K \mathcal{GG}( \theta_k | r, \beta, \vartheta_k ) 
		\propto \det(D_{\boldsymbol{\theta}})^{r \beta - 1} \exp\left( - \sum_{k=1}^K ( \theta_k/\vartheta_k )^r \right). 
\end{equation} 
Here, $\mathcal{GG}$ is the generalized gamma distribution  
\begin{equation}\label{eq:pdf_gamma} 
\begin{aligned}
    \mathcal{GG}( \theta_k | r, \beta, \vartheta_k ) 
    		\propto \theta_k^{r \beta - 1} \exp\left( - ( \theta_k/\vartheta_k )^r \right),
\end{aligned}
\end{equation} 
where $r \in \R\setminus\{0\}$, $\beta > 0$, and $\vartheta_k > 0$ for $k=1,\dots,K$.
\Cref{fig:graphical_model} provides a graphical illustration and summary of our joint-sparsity-promoting hierarchical Bayesian model. 

\begin{figure}[tb]
\centering
\resizebox{0.4\textwidth}{!}{%
\begin{tikzpicture}
    \node[obs] (y1) {$\mathbf{y}_1$}; %
	\node[latent, right=1.5 of y1] (x1) {$\mathbf{x}_1$} ; %
	%
	\node[const, right=1.75 of x1] (theta_aux) {}; %
	\node[latent, below=.725 of theta_aux] (theta) {$\boldsymbol{\theta}$} ; %
	\node[obs, below=1.5 of y1] (y2) {$\mathbf{y}_2$}; %
	\node[latent, right=1.5 of y2] (x2) {$\mathbf{x}_2$} ; %
	%
	\edge {x1} {y1}; %
	\edge {x2} {y2}; %
	\edge {theta} {x1}; %
	\edge {theta} {x2}; %
	\plate[inner sep=0.3cm, xshift=0cm, yshift=0cm] {plate_y1} {(y1)} {$M_1$}; %
	\plate[inner sep=0.3cm, xshift=0cm, yshift=0cm] {plate_x} {(x1)} {$N$}; %
	\plate[inner sep=0.3cm, xshift=0cm, yshift=0cm] {plate_theta} {(theta)} {$K$}; %
	\plate[inner sep=0.3cm, xshift=0cm, yshift=0cm] {plate_y2} {(y2)} {$M_2$}; %
	\plate[inner sep=0.3cm, xshift=0cm, yshift=0cm] {plate_x} {(x2)} {$N$}; %
\end{tikzpicture} 
}%
\caption{
    Graphical representation of the hierarchical Bayesian model promoting joint sparsity for two ($L=2$) measurement and parameters vectors, $\mathbf{y}_1, \mathbf{y}_2$ and $\mathbf{x}_1, \mathbf{x}_2$, respectively. 
  	Shaded and plain circles represent observed and unobserved (hidden) random variables, respectively. 
	The arrows indicate how the random variables influence each other: 
	The parameter vectors $\mathbf{x}_1,\mathbf{x}_2$ are connected to the measurement vectors $\mathbf{y}_1,\mathbf{y}_2$, respectively, via the likelihood \cref{eq:joint_likelihood}; 
	The common hyper-parameters $\boldsymbol{\theta}$ are connected to $\mathbf{x}_1, \mathbf{x}_2$ via the joint-sparsity-promoting prior \cref{eq:joint_prior}. 
	Using common gamma hyper-parameters $\boldsymbol{\theta}$ (instead of separate ones for $\mathbf{x}_1, \mathbf{x}_2$) results in $R \mathbf{x}_1$ and $R \mathbf{x}_2$ having the same support. 
	}
\label{fig:graphical_model}
\end{figure}
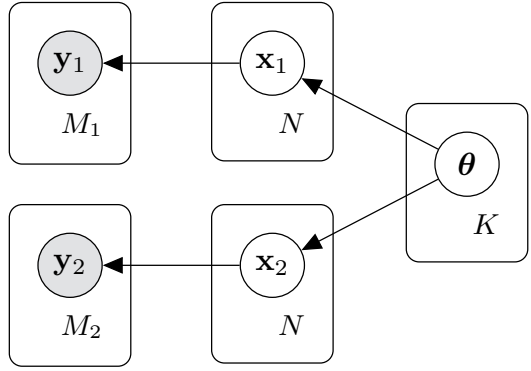

\section{Bayesian inference} 
\label{sec:BI} 

We now address Bayesian inference for the joint-sparsity-promoting hierarchical Bayesian model proposed in \Cref{sec:model}. 
To this end, 
for given MMVs $\mathbf{y}_{1:L}$, 
we solve for the \emph{maximum a posterior (MAP) estimate} $(\mathbf{x}_{1:L}^{\MAP},\boldsymbol{\theta}^{\MAP})$,
which is the maximizer of the posterior density \cref{eq:posterior}.
Equivalently, the MAP estimate is the minimizer of the negative logarithm of the posterior, i.e.,  
\begin{equation}\label{eq:MAP_estimate}
	(\mathbf{x}_{1:L}^{\MAP},\boldsymbol{\theta}^{\MAP}) 
		= \argmin_{ \mathbf{x}_{1:L}, \boldsymbol{\theta} } \left\{ \mathcal{G}( \mathbf{x}_{1:L}, \boldsymbol{\theta} \right\}, 
\end{equation} 
where the objective function $\mathcal{G}$ is 
\begin{equation}\label{eq:obj_fun} 
	\mathcal{G}( \mathbf{x}_{1:L}, \boldsymbol{\theta} ) 
		= - \log \pi_{ \mathbf{X}_{1:L}, \boldsymbol{\Theta} | \mathbf{Y}_{1:L} }( \mathbf{x}_{1:L}, \boldsymbol{\theta} | \mathbf{y}_{1:L} ).
\end{equation} 
Substituting \cref{eq:joint_likelihood,eq:joint_prior,eq:hyper_priors} into \cref{eq:obj_fun}, we obtain
\begin{equation}\label{eq:G}
\begin{aligned} 
	\mathcal{G}( \mathbf{x}_{1:L}, \boldsymbol{\theta} ) 
		= \frac{1}{2} \left( \sum_{l=1}^L \| F_l \mathbf{x}_l - \mathbf{y}_l \|_2^2 + \| D_{\boldsymbol{\theta}}^{-1/2} R \mathbf{x}_l  \|_2^2 \right) + \sum_{k=1}^K \left( \frac{\theta_k}{\vartheta_k} \right)^r - \eta \sum_{k=1}^K \log( \theta_k ) 
\end{aligned}	
\end{equation}
up to constants that neither depend on $\mathbf{x}_{1:L}$ nor $\boldsymbol{\theta}$, where $\eta = r \beta - (L/2 + 1)$. 
In what follows we discuss how the minimizer of $\mathcal{G}$ --- and therefore the MAP estimate $(\mathbf{x}_{1:L}^{\MAP},\boldsymbol{\theta}^{\MAP})$ --- can be approximated.

\subsection{The iterative alternating sequential algorithm} 
\label{sub:IAS}

We use a block-coordinate descent approach \cite{wright2015coordinate,beck2017first} to approximate the MAP estimate. 
In the context of conditionally Gaussian priors for which the covariance is assumed to follow a (generalized) gamma distribution, a prevalent block-coordinate descent method is the so-called iterative alternating sequential (IAS) algorithm \cite{calvetti2007gaussian,calvetti2015hierarchical,calvetti2019hierachical,calvetti2020sparse}.
The IAS algorithm computes the minimizer of the objective function $\mathcal{G}$ by alternatingly (i) minimizing $\mathcal{G}$ w.r.t.\ $\mathbf{x}_{1:L}$ for fixed $\boldsymbol{\theta}$ and (ii) minimizing $\mathcal{G}$ w.r.t.\ $\boldsymbol{\theta}$ for fixed $\mathbf{x}_{1:L}$. 
Given an initial guess for the hyper-parameter vector $\boldsymbol{\theta}$, the IAS algorithm proceeds through a sequence of updates of the form 
\begin{equation}\label{eq:IAS}
	\mathbf{x}_{1:L} = \argmin_{\mathbf{x}_{1:L}} \left\{ \mathcal{G}(\mathbf{x}_{1:L},\boldsymbol{\theta}) \right\}, \quad 
	\boldsymbol{\theta} = \argmin_{\boldsymbol{\theta}} \left\{ \mathcal{G}(\mathbf{x}_{1:L},\boldsymbol{\theta}) \right\},
\end{equation}
until a convergence criterion is met.\footnote{
In our implementation, we stop if the relative change in the $\mathbf{x}_{l}$ variables falls below a given threshold. 
For simplicity, we initialize the hyper-parameter vector as $\boldsymbol{\theta} = [1,\dots,1]$.
} 
Efficient implementation of the two update steps in \cref{eq:IAS} is discussed below.

\subsection{Updating the parameter vectors} 
\label{sub:x_update}

Updating $\mathbf{x}_{1:L}$ given $\boldsymbol{\theta}$ reduces to solving the quadratic optimization problems 
\begin{equation}\label{eq:x_update} 
	\mathbf{x}_l 
		= \argmin_{\mathbf{x}} \left\{ \| F_l \mathbf{x} - \mathbf{y}_l \|_2^2 + \| D_{\boldsymbol{\theta}}^{-1/2} R \mathbf{x} \|_2^2 \right\}, \quad 
		l=1,\dots,L,
\end{equation}
with $D_{\boldsymbol{\theta}} = \diag(\boldsymbol{\theta})$. 
Note that the optimization problems \cref{eq:x_update} are decoupled and can thus be solved efficiently in parallel. 
Furthermore, assuming the \emph{common kernel condition} (also see \cite{glaubitz2022generalized,xiao2023sequential})
\begin{equation}\label{eq:common_kernel}
    \kernel(F_l) \cap \kernel(R) = \{ \mathbf{0} \}, \quad l=1,\dots,L,
\end{equation} 
holds, each optimization problem in \cref{eq:x_update} has a unique solution. 
Here, $\kernel(G) = \{ \, \mathbf{x} \in \R^N \mid G \mathbf{x} = \mathbf{0} \, \}$ is the kernel of an operator $G: \R^N \to \R^M$, i.e., the set of vectors that are mapped to zero by $G$. 
The common kernel condition \cref{eq:common_kernel} guarantees that the combination of prior information and the given measurements will result in a well-posed problem, which is a commonly accepted assumption in regularized inverse problems \cite{kaipio2006statistical,tikhonov2013numerical}. 
Finally, we can efficiently solve the quadratic optimization problems \cref{eq:x_update} using various existing methods, including the fast iterative shrinkage-thresholding (FISTA) algorithm \cite{beck2009fast}, the preconditioned conjugate gradient (PCG) method \cite{saad2003iterative}, potentially combined with an early stopping based on Morozov's discrepancy principle \cite{calvetti2015hierarchical,calvetti2018bayes,calvetti2020sparse}, and the gradient descent approach \cite{glaubitz2022generalized}.  
There is no general advantage of one method over another, and the choice should be made based on the specific problem (and the structure of $F_l$ and $R$) at hand.

\subsection{Updating the hyper-parameters} 
\label{sub:beta_update}

We next address the update for the hyper-parameters $\boldsymbol{\theta}$, for which we must solve 
\begin{equation}\label{eq:update_beta1} 
	\boldsymbol{\theta} = \argmin_{\boldsymbol{\theta}} \left\{ \mathcal{G}(\mathbf{x}_{1:L},\boldsymbol{\theta}) \right\} 
\end{equation}
for fixed parameter vectors $\mathbf{x}_{1:L}$. 
Substituting \cref{eq:G} into \cref{eq:update_beta1} and ignoring all terms that do not depend on $\boldsymbol{\theta}$, \cref{eq:update_beta1} is equivalent to 
\begin{equation}\label{eq:update_beta2} 
	\theta_k = \argmin_{\theta_k} \left\{ \theta_k^{-1} \left( \sum_{l=1}^L [R \mathbf{x}_l]_k^2/2 \right) + \left( \frac{\theta_k}{\vartheta_k} \right)^r - \eta \log( \theta_k ) \right\}, 
\end{equation} 
where $\eta = r \beta - (L/2 + 1)$ and $[R \mathbf{x}_l]_k$ denotes the $k$-th entry of the vector $R \mathbf{x}_l \in \R^K$. 
Differentiating the objective function in \cref{eq:update_beta2} w.r.t.\ $\theta_k$ and setting this derivative to zero yields 
\begin{equation}\label{eq:update_beta3} 
	0 = - \theta_k^{-2} \left( \sum_{l=1}^L [R \mathbf{x}_l]_k^2/2 \right) + \theta_k^{r-1} \left( \frac{r}{\vartheta_k^r} \right) - \theta_k^{-1} \eta.
\end{equation} 
For some values of $r$, \cref{eq:update_beta3} admits an analytical solution. 
For instance, if $r=1$, \cref{eq:update_beta3} is equivalent to 
\begin{subequations}\label{eq:update_beta4}
\begin{equation}\label{eq:update_beta_rp1} 
	\theta_k = \frac{ \vartheta_k }{2} \left( \eta + \sqrt{ \eta^2 + 2 \vartheta^{-1} \sum_{l=1}^L [R \mathbf{x}_l]_k^2 } \right), \quad k=1,\dots,K,
\end{equation}  
where $\eta = r \beta - (L/2 + 1)$, 
and respectively for $r = -1$, we have
\begin{equation}\label{eq:update_beta_rm1} 
	\theta_k = \frac{ \sum_{l=1}^L [R \mathbf{x}_l]_k^2/2 + \vartheta_k}{ - \eta }, \quad k=1,\dots,K.
\end{equation}
\end{subequations}
By assuming $\eta > 0$ in \cref{eq:update_beta_rp1} and $\eta < 0$ in \cref{eq:update_beta_rm1}, the hyper-parameters are ensured to be positive.
We refer to \cite{calvetti2020sparse,calvetti2020sparsity} for details on how \cref{eq:update_beta3} can be solved numerically in the general case.

\subsection{Proposed algorithm and its relationship to current methodology}
\label{sub:comparison_IAS} 

\cref{algo:MMV_IAS} summarizes the above procedure to approximate the MAP estimate $(\mathbf{x}_{1:L}^{\MAP},\boldsymbol{\theta}^{\MAP})$ of our joint-sparsity-promoting hierarchical Bayesian model proposed in \Cref{sec:model}. 
We will refer to this method as the \emph{MMV-IAS algorithm}.

\begin{algorithm}[h!]
\caption{The MMV-IAS algorithm}\label{algo:MMV_IAS} 
\begin{algorithmic}[1]
    \STATE{Choose model parameters $(r,\beta,\boldsymbol{\vartheta})$ and initialize $\boldsymbol{\theta}$} 
    \REPEAT
		\STATE{Update the parameter vectors $\mathbf{x}_{1:L}$ (in parallel) according to \cref{eq:x_update}}
		\STATE{Update the hyper-parameters $\boldsymbol{\theta}$ according to \cref{eq:update_beta4}} 
    \UNTIL{convergence or the maximum number of iterations is reached}
\end{algorithmic}
\end{algorithm}

\subsubsection*{Relationship to the IAS algorithm}

Our MMV-IAS algorithm builds upon the standard IAS algorithm \cite{calvetti2020sparse,calvetti2020sparsity} and reduces to it when handling a single measurement and parameter vector ($L=1$). 
However, it is important to note that using the standard IAS to recover each $\mathbf{x}_{1:L}$ separately from the MMVs $\mathbf{y}_{1:L}$ is \emph{not} equivalent to the MMV-IAS algorithm as it does not consider joint sparsity. 
Our numerical examples in \Cref{sec:numerics} demonstrate that this can lead to suboptimal results.

\begin{remark}[Extensions of the IAS algorithm]
	Several advancements to the IAS algorithm have recently been made. 
	In \cite{calvetti2020sparsity}, hybrid versions were proposed to balance convex and non-convex models to enhance sparsity while avoiding stopping at non-global minima. 
	In \cite{si2022path}, path-following methods were used to smoothly transition from convex to non-convex models. 
	Additionally, in \cite{kim2022hierarchical}, the IAS algorithm was generalized for non-linear data models with ensemble Kalman methods. 
	While beyond the scope of this current investigation, it would be beneficial to integrate our joint-sparsity-promoting approach with these advancements as deemed appropriate for a particular application.
\end{remark}

\subsubsection*{Relationship to iteratively re-weighted least squares} 

The update steps \cref{eq:update_beta4,eq:x_update} of \cref{algo:MMV_IAS} can be understood as an iteratively re-weighted least squares (IRLS) algorithm \cite{chartrand2008iteratively,daubechies2010iteratively} with automatic inter-signal coupling. 
IRLS algorithms aim to recover sparse signals by assigning individual weights to the components of $\mathbf{x}$ and updating these weights iteratively. 
This concept is also applied in iteratively re-weighted $\ell^1$-regularization methods \cite{candes2008enhancing}. 
The MMV-IAS framework provides a Bayesian interpretation for weighting strategies and can be used to tailor these weights based on statistical assumptions of the problem.

\subsubsection*{Relationship to group sparsity}

We next address a possible generalization of our joint-sparsity-promoting hierarchical Bayesian model, and in the process, reveal its connection to group-sparsity-promoting models, as discussed in \cite{calvetti2015hierarchical}. 
More precisely, we show that our joint-sparsity-promoting approach can be re-interpreted as a group-sparsity-promoting model, and generalizes the one in \cite{calvetti2015hierarchical} in several ways.
Observe that the joint prior \cref{eq:joint_prior} can be rewritten in product form as 
\begin{equation}\label{eq:group_prior1} 
	\pi_{\mathbf{X}_{1:L} | \boldsymbol{\Theta}}( \mathbf{x}_{1:L} | \boldsymbol{\theta} ) 
		\propto \prod_{k=1}^K \left\{ \theta_k^{-L/2} \exp\left( -\frac{1}{2 \theta_k} \sum_{l=1}^L  \left[ R \mathbf{x}_l \right]_k^2 \right) \right\}. 
\end{equation}
Furthermore, denoting by $[ R \mathbf{x}_{\bullet} ]_k = \left( [ R \mathbf{x}_{1} ]_k, \dots, [ R \mathbf{x}_{L} ]_k \right) \in \R^L$ the vector that contains the $k$th components of the vectors $R \mathbf{x}_1,\dots,R \mathbf{x}_L$, \cref{eq:group_prior1} becomes 
\begin{equation}\label{eq:group_prior2} 
	\pi_{\mathbf{X}_{1:L} | \boldsymbol{\Theta}}( \mathbf{x}_{1:L} | \boldsymbol{\theta} ) 
		\propto \prod_{k=1}^K \left\{ \theta_k^{-L/2} \exp\left( -\frac{1}{2 \theta_k} \| [ R \mathbf{x}_{\bullet} ]_k \|_2^2 \right) \right\}. 
\end{equation}
Note that, in combination with a suitable generalized gamma hyper-prior, \cref{eq:group_prior2} promotes only a few of the groups $[ R \mathbf{x}_{\bullet} ]_1, \dots, [ R \mathbf{x}_{\bullet} ]_K$ not to be zero, where the $k$th group $[ R \mathbf{x}_{\bullet} ]_k$ is zero if and only if $\| [ R \mathbf{x}_{\bullet} ]_k \|_2 = 0$.
This allows us to re-interpret the ``joint-sparsity-promoting" prior \cref{eq:joint_prior} as a ``group-sparsity-promoting" prior. 
Furthermore, observe that that is straightforward to replace $\| \cdot \|_2$ with other weighted norms. 
The product form \cref{eq:group_prior2} of the joint prior density implies that  
\begin{equation}\label{eq:group_prior3}
	[ R \mathbf{x}_{\bullet} ]_k \sim \mathcal{N}( \mathbf{0}, \theta_k I ), 
\end{equation} 
where $I \in \R^{L \times L}$ is the usual identity matrix.
Or, in other words, the $L$ components of the vector $[ R \mathbf{x}_{\bullet} ]_k \in \R^L$ are independent and identically distributed. 
However, we can relax this assumption to allow non-trivial correlations between the components to be modeled, which is necessary in some applications. 
For instance, see \cite{calvetti2015hierarchical}, which considered a hierarchical Bayesian model for the inverse problem of magnetoencephalography (MEG)---aiming at estimating electromagnetic cerebral activity from measurements of the magnetic fields outside the head. 
To this end, we can replace \cref{eq:group_prior3} by 
\begin{equation}\label{eq:group_prior4}
	[ R \mathbf{x}_{\bullet} ]_k \sim \mathcal{N}( \mathbf{0}, \theta_k C_K ), 
\end{equation}
where $C_k \in \R^{L \times L}$ is an arbitrary covariance matrix. 
In this case, the joint prior \cref{eq:group_prior2} becomes 
\begin{equation}\label{eq:group_prior5} 
	\pi_{\mathbf{X}_{1:L} | \boldsymbol{\Theta}}( \mathbf{x}_{1:L} | \boldsymbol{\theta} ) 
		\propto \prod_{k=1}^K \left\{ \theta_k^{-L/2} \exp\left( -\frac{1}{2 \theta_k} \| [ R \mathbf{x}_{\bullet} ]_k \|_{C_k}^2 \right) \right\} 
\end{equation}
with norm $\| \mathbf{b} \|_{C_k}^2 = \mathbf{b}^T C_k^{-1} \mathbf{b}$. 
The usual Euclidean norm $\| \cdot \|_2$ corresponds to the special case $C_k = I$. 
Moreover, the objective function $\mathcal{G}$ in \cref{eq:G}, which is the negative logarithm of the posterior, is then 
\begin{equation}\label{eq:group_G}
\begin{aligned} 
	\mathcal{G}( \mathbf{x}_{1:L}, \boldsymbol{\theta} ) 
		= \frac{1}{2} \sum_{l=1}^L \| F_l \mathbf{x}_l - \mathbf{y}_l \|_2^2 
			+ \frac{1}{2} \sum_{k=1}^K \theta_k^{-1} \| [ R \mathbf{x}_{\bullet} ]_k \|_{C_k}^2 
			+ \sum_{k=1}^K \left( \frac{\theta_k}{\vartheta_k} \right)^r 
			- \eta \sum_{k=1}^K \log( \theta_k ), 
\end{aligned}	
\end{equation}
up to constants that neither depend on $\mathbf{x}_{1:L}$ nor $\boldsymbol{\theta}$, where $\eta = r \beta - (L/2 + 1)$. 
While the $\mathbf{x}_{l}$-updates of the IAS algorithm in \cref{sub:IAS} are still the same as in \cref{sub:x_update}, the $\boldsymbol{\theta}$-update \cref{eq:update_beta2} in \cref{sub:beta_update} transforms into 
\begin{equation}\label{eq:group_theta} 
	\theta_k = \argmin_{\theta_k} \left\{ \frac{1}{2 \theta_k} \| [ R \mathbf{x}_{\bullet} ]_k \|_{C_k}^2 + \left( \frac{\theta_k}{\vartheta_k} \right)^r - \eta \log( \theta_k ) \right\}. 
\end{equation} 
Hence
we recover the hierarchical Bayesian model and the update rules in \cite{calvetti2015hierarchical} as the special case of $L=3$, $R = I$, and a usual gamma hyper-prior ($r=1$).

\subsection{Uncertainty quantification}
\label{sub:UQ_IAS} 

Although we only solve for the MAP estimate of the posterior density $\pi_{ \mathbf{X}_{1:L}, \boldsymbol{\Theta} | \mathbf{Y}_{1:L} = \mathbf{y}_{1:L} }$ for given MMV data $\mathbf{y}_{1:L}$, we can still partially quantify uncertainty in the recovered parameter vectors. 
Specifically, for fixed hyper-parameters $\boldsymbol{\theta}$, Bayes' theorem yields 
\begin{equation}\label{eq:posterior_x}
	\pi_{ \mathbf{X}_{1:L} | \boldsymbol{\Theta} = \boldsymbol{\theta}, \mathbf{Y}_{1:L} = \mathbf{y}_{1:L} }( \mathbf{x}_{1:L} )
		\propto \pi_{ \mathbf{Y}_{1:L} | \mathbf{X}_{1:L}}(  \mathbf{y}_{1:L} | \mathbf{x}_{1:L} ) \, \pi_{ \mathbf{X}_{1:L} | \boldsymbol{\Theta}}( \mathbf{x}_{1:L} | \boldsymbol{\theta} )
\end{equation}
for the fully conditional posterior for the parameter vectors $\mathbf{X}_{1:L}$. 
Here, $\pi_{ \mathbf{Y}_{1:L} | \mathbf{X}_{1:L}}$ is the likelihood density \cref{eq:joint_likelihood} and $\pi_{ \mathbf{X}_{1:L} | \boldsymbol{\Theta}}$ is the prior \cref{eq:joint_prior}.
Substituting \cref{eq:joint_likelihood,eq:joint_prior} into \cref{eq:posterior_x} yields 
\begin{equation}\label{eq:posterior_x2} 
	\pi_{ \mathbf{X}_{1:L} | \boldsymbol{\Theta} = \boldsymbol{\theta}, \mathbf{Y}_{1:L} = \mathbf{y}_{1:L} }( \mathbf{x}_{1:L} ) 
		\propto \exp\left( -\frac{1}{2} \sum_{l=1}^L \| F_l \mathbf{x}_l - \mathbf{y}_l \|_2^2 + \| D_{\boldsymbol{\theta}}^{-1/2} R \mathbf{x}_l \|_2^2 \right).
\end{equation}
For covariance matrix $\Gamma_l = ( F_l^T F_l + {R^T D_{\boldsymbol{\theta}}^{-1} R} )^{-1}$ and mean $\boldsymbol{\mu}_l = \Gamma_l F_l^T \mathbf{y}_l$, we therefore have 
\begin{equation}\label{eq:posterior_x3} 
	\pi_{ \mathbf{X}_{l} | \boldsymbol{\Theta} = \boldsymbol{\theta}, \mathbf{Y}_{l} = \mathbf{y}_{l} }( \mathbf{x}_{l} ) 
		\propto \mathcal{N}( \mathbf{x}_l | \boldsymbol{\mu}_l, \Gamma_l ).
\end{equation}
The common kernel condition \cref{eq:common_kernel} ensures that $\Gamma_l$ is an SPD covariance matrix. 
We can now quantify uncertainty in $\mathbf{X}_l$ by sampling from the normal distribution and subsequently determining, for instance, the sample mean and credible intervals. 
We refer to \cite{vono2022high} for more details on sampling from high-dimensional Gaussian distributions. 

\begin{remark}[Full posterior sampling]
	Although our approach quantifies uncertainty in the parameter vectors, it does not account for the uncertainty in the hyper-parameters. 
	To fully address uncertainty, Bayesian MAP estimation should be replaced with, for instance, full posterior sampling using a Markov chain Monte Carlo (MCMC) method. 
	The goal of MCMC is to compute realizations of a Markov chain that is stationary w.r.t.\ the posterior distribution \cite{owen2013monte}. 
	However, sampling from sparsity-promoting hierarchical models is challenging since 
	(1) they are high-dimensional, which leads to high `per sample’ costs; 
	(2) they can have multiple modes separated by regions of low density, which are challenging to traverse; and 
	(3) there is a strong correlation between the parameters of primary interest and the hyper-parameters, resulting in poor mixing and slow convergence. 
	Recent research, specifically in \cite{calvetti2023computationally}, has made strides in addressing the issue of high 'per sample' costs. 
	This was achieved through a re-parameterization that converts the posterior into a form dominated by white Gaussian noise. 
	Following this transformation, the preconditioned Crank--Nicholson (pCN) scheme was employed to efficiently sample from the transformed posterior. 
	Nonetheless, challenges (2) and (3), concerning mode traversal and parameter correlation, respectively, remain unresolved. 
	Extensive research is needed to address such challenges and is beyond the scope of this investigation.
\end{remark}
 
\section{Analysis: Complexity, convexity, and convergence} 
\label{sec:analysis} 

We briefly analyze the computational complexity, convexity, and convergence of the MMV-IAS algorithm (\cref{algo:MMV_IAS}) and the underlying objective function.

\subsection{Computational complexity}
\label{sub:complexity}

Consider $L$ parameter vectors $\mathbf{x}_1,\dots,\mathbf{x}_L \in \R^N$. 
Different methods can be used to solve the $\mathbf{x}_l$-updates in \cref{algo:MMV_IAS}. 
Assuming that we use the PCG method, each $\mathbf{x}_l$-update has computational complexity $\mathcal{O}(\tilde{N}_l)$, where $\tilde{N}_l$ is the number of non-zero elements of the matrix $F_l^T F_l + R^T D_{\boldsymbol{\theta}}^{-1/2} R$. 
In the worst case ($\tilde{N}_l = N^2$ for all $l=1,\dots,L$), the computational cost for updating all parameter vectors is $\mathcal{O}(L N^2)$. 
As already noted, however, these updates can be performed in parallel. 
Furthermore, we perform the $\boldsymbol{\theta}$-update in \cref{algo:MMV_IAS} using one of the explicit formulas \cref{eq:update_beta4}. 
If $R \in \R^{K \times N}$ for $l=1,\dots,L$, then the $\boldsymbol{\theta}$-update has computational complexity $\mathcal{O}(K N)$. 
See \cite[Section 4.1]{glaubitz2022generalized} for more details.
In sum, if we run \cref{algo:MMV_IAS} for $I$ iterations, then its overall order of operations is (at most) $\mathcal{O}(I (L N^2 + K N) )$.

\subsection{Convexity of the objective function}
\label{sub:convexity} 

We next investigate the convexity of the objective function $\mathcal{G}$ in \cref{eq:G}.  
\cref{thm:convexity} provides the choices of hyper-parameters $(r, \beta, \vartheta_{1:K})$ for which the objective function $\mathcal{G}$ is globally or locally convex, that is, when the convexity is restricted to specific values for $\boldsymbol{\theta}$. It also describes how the number of MMVs influences convexity.

\begin{theorem}[Convexity of the objective function]\label{thm:convexity}
	Let $\mathcal{G}$ be the objective function in \cref{eq:G} and $\eta = r \beta - (L/2 + 1)$. 
	\begin{enumerate}
		\item[(a)] 
		If $r \geq 1$ and $\eta > 0$, then $\mathcal{G}$ is globally convex. 
		
		\item[(b)]
		If $0<r<1$ and $\eta > 0$, or if $r < 0$, then $\mathcal{G}$ is convex provided that 
		\begin{equation}\label{eq:convexity_cond}
			\theta_k < \vartheta_k \left( \frac{\eta}{r |r-1|} \right)^{1/r}, 
				\quad k=1,\dots,K. 
		\end{equation}
		
	\end{enumerate}
\end{theorem}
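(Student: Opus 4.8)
The plan is to split the objective $\mathcal{G}$ from \cref{eq:G} into three additive pieces and exploit that a sum of convex functions is convex, so that I only ever have to argue about one piece at a time. Concretely, I would write
\[
\mathcal{G}(\mathbf{x}_{1:L},\boldsymbol{\theta})
= \underbrace{\frac{1}{2}\sum_{l=1}^L \| F_l\mathbf{x}_l - \mathbf{y}_l \|_2^2}_{\mathcal{G}_1}
+ \underbrace{\frac{1}{2}\sum_{k=1}^K \frac{1}{\theta_k}\sum_{l=1}^L [R\mathbf{x}_l]_k^2}_{\mathcal{G}_2}
+ \underbrace{\sum_{k=1}^K \left[ \left( \frac{\theta_k}{\vartheta_k} \right)^r - \eta \log\theta_k \right]}_{\mathcal{G}_3}.
\]
Here $\mathcal{G}_1$ is a sum of convex quadratics in $\mathbf{x}_{1:L}$ that does not involve $\boldsymbol{\theta}$, hence jointly convex; $\mathcal{G}_3$ depends only on $\boldsymbol{\theta}$ and separates across the index $k$; and the only term that couples $\mathbf{x}_{1:L}$ with $\boldsymbol{\theta}$ is $\mathcal{G}_2$. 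The whole argument therefore reduces to (i) establishing joint convexity of $\mathcal{G}_2$, which will hold \emph{unconditionally}, and (ii) a one-dimensional convexity analysis of $\mathcal{G}_3$, which is where the hyper-parameter conditions enter.

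For $\mathcal{G}_2$ I would invoke the joint convexity of the quadratic-over-linear map $(\mathbf{w},t)\mapsto \|\mathbf{w}\|_2^2/t$ on $\R^L\times\R_{>0}$; a short Hessian computation shows its Hessian is positive semidefinite (in the scalar case $s^2/t$ the Hessian has determinant $0$ and positive trace). Grouping the $k$-th components $\mathbf{w}_k=([R\mathbf{x}_1]_k,\dots,[R\mathbf{x}_L]_k)$, each summand $\|\mathbf{w}_k\|_2^2/(2\theta_k)$ is jointly convex in $(\mathbf{w}_k,\theta_k)$, and precomposing with the affine map $\mathbf{x}_{1:L}\mapsto\mathbf{w}_k$ preserves convexity. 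Summing over $k$ shows $\mathcal{G}_2$ is jointly convex in $(\mathbf{x}_{1:L},\boldsymbol{\theta})$ for $\boldsymbol{\theta}>\mathbf{0}$, with no restriction on $(r,\beta,\vartheta_{1:K})$.

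It then remains to decide when $\mathcal{G}_3$ is convex, i.e.\ when each $h_k(\theta_k)=(\theta_k/\vartheta_k)^r-\eta\log\theta_k$ is convex on $\theta_k>0$. Differentiating twice gives
\[
h_k''(\theta_k)
= \frac{r(r-1)}{\vartheta_k^r}\,\theta_k^{r-2} + \frac{\eta}{\theta_k^2}
= \theta_k^{-2}\left( \frac{r(r-1)}{\vartheta_k^r}\,\theta_k^{r} + \eta \right),
\]
so convexity of $h_k$ at $\theta_k$ is equivalent to the bracketed quantity being nonnegative. For part (a), $r\geq 1$ makes $r(r-1)\geq 0$, and with $\eta>0$ the bracket is positive for every $\theta_k>0$; hence $\mathcal{G}_3$ is convex everywhere and $\mathcal{G}=\mathcal{G}_1+\mathcal{G}_2+\mathcal{G}_3$ is globally convex. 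For part (b) the coefficient $r(r-1)$ is negative (when $0<r<1$) or the interplay of signs is more delicate (when $r<0$), which forces the restriction on $\theta_k$.

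The main obstacle — and the step needing care — is converting $h_k''(\theta_k)\geq 0$ into the clean threshold \cref{eq:convexity_cond} uniformly across both regimes of part (b). The clean way to do this is to note that $r<1$ in both subcases, so $r(r-1)=-r|r-1|$, turning the convexity inequality into $(\theta_k/\vartheta_k)^r \lessgtr \eta/(r|r-1|)$. One must then track two sign flips: dividing by $r|r-1|$ (positive for $0<r<1$, negative for $r<0$) and applying $t\mapsto t^{1/r}$ (increasing for $0<r<1$, decreasing for $r<0$). In the $r<0$ case these two flips cancel, and in both subcases one lands on the same bound $\theta_k<\vartheta_k(\eta/(r|r-1|))^{1/r}$; I would also remark that $\eta<0$ automatically when $r<0$ (since $\eta=r\beta-(L/2+1)$ with $\beta>0$), which is exactly what keeps $\eta/(r|r-1|)$ positive so that the $r$-th root is well defined. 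Imposing \cref{eq:convexity_cond} for all $k$ makes every $h_k$ convex on that region, and adding back the globally convex $\mathcal{G}_1+\mathcal{G}_2$ yields convexity of $\mathcal{G}$ there, completing part (b).
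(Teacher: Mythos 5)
Your proof is correct, but it takes a genuinely different route from the paper's. The paper works at the Hessian level throughout: it first computes all second derivatives of $\mathcal{G}$ (\cref{lem:derivatives}), then bounds the quadratic form $\mathbf{u}^T H \mathbf{u}$ from below by completing the square to absorb the mixed $\mathbf{x}$--$\boldsymbol{\theta}$ cross terms (\cref{lem:Hessian}), which leaves exactly the residual $\sum_k \theta_k^{-2} w_k^2 \bigl( \theta_k^r \, r(r-1)/\vartheta_k^r + \eta \bigr)$ whose sign is then enforced case by case. You instead split $\mathcal{G}$ additively and dispatch the coupling term $\mathcal{G}_2$ via the known joint convexity of the quadratic-over-linear (perspective) function, reducing everything else to the one-dimensional analysis of $h_k(\theta_k) = (\theta_k/\vartheta_k)^r - \eta \log \theta_k$. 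The two arguments are at heart the same inequality: the paper's completed square $\theta_k^{-3} \bigl( \theta_k [R\mathbf{v}_l]_k - w_k [R\mathbf{x}_l]_k \bigr)^2$ is precisely the statement that the Hessian of $(\mathbf{w},t) \mapsto w^2/t$ is positive semidefinite, so your citation of the perspective function packages the paper's Lemma A.2 computation into a standard convex-analysis fact. What your route buys is modularity and brevity --- no explicit mixed-Hessian block is ever needed, and the hyper-parameter conditions are transparently isolated in $\mathcal{G}_3$; what the paper's route buys is a self-contained computation that also produces the explicit second-derivative formulas \cref{eq:derivatives}, which have independent use. You also make explicit two points the paper compresses into ``the proof for the different cases follows by enforcing condition \cref{eq:convexity_proof2}'': the identity $r(r-1) = -r|r-1|$ valid for all $r<1$ that unifies both regimes of part (b), and the observation that $\eta < 0$ automatically when $r<0$ (since $\beta > 0$), which is what makes $\eta/(r|r-1|)$ positive and the $r$-th root well defined; your bookkeeping of the two sign flips (division by $r|r-1|$ and application of $t \mapsto t^{1/r}$) is accurate in both subcases.
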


\cref{thm:convexity} highlights the impact of MMV data on the convexity of the objective function $\mathcal{G}$ in our joint-sparsity-promoting hierarchical Bayesian model. 
In particular, as $L$ increases, the linear decrease of $\eta$ results in the condition $\eta > 0$ becoming more restrictive. 
Furthermore, since  $\eta$ decreases as $L$ increases, we see in part (b) that the right-hand side of \cref{eq:convexity_cond} is also smaller.
That is, the convex set in which $G$ is convex shrinks as $L$ increases, revealing a trade-off between promoting joint sparsity and decreasing convexity as the number of coupled MMV data and parameter vectors increases.
\cref{thm:convexity} is a natural extension of \cite[Theorem 4.1]{calvetti2020sparse} (also see \cite[Theorem 3.1]{calvetti2020sparsity}), which we recover as the special case of $L=1$ (and $R = I$). 
The proof is provided in \cref{app:convexity_proof}.

\begin{remark} [Convergence of MMV-IAS]\label{rem:convergence} 
	The findings in \cref{thm:convexity} impact the IAS algorithm's performance. 
	When the objective function $\mathcal{G}$ exhibits global convexity, the MMV-IAS algorithm is guaranteed to converge to the unique minimum of $\mathcal{G}$. 
	This follows from the standard theory for coordinate descent approaches \cite{wright2015coordinate,beck2017first}. 
	Although global convexity streamlines the computation of the MAP estimate, there is a strong justification for investigating alternative choices of $r$ that yield hierarchical priors with enhanced sparsity-promoting properties. 
	Empirical studies suggest that when the objective function is not globally convexity, the resulting sparsity of the minimizer is often increased. 
	Nevertheless, a non-convex $\mathcal{G}$ can lead to the emergence of misleading local minima, potentially causing the MMV-IAS algorithm to become entrapped in one. 
	To overcome the risk of the algorithm prematurely converging to an incorrect local minimum, \cite{calvetti2020sparsity} recommends the adoption of hybrid versions of the IAS algorithm in the single-measurement-vector case. 
	In such constructions, the global convergence traits of gamma hyper-priors ($r=1$) are leveraged initially to close in on the vicinity of the unique global minimum, after which there is a shift to a generalized gamma hyper-prior with $r < 1$ to provide a stronger sparsity stimulus.
\end{remark} 
\section{Extension to generalized sparse Bayesian learning} 
\label{sec:GSBL}

We now briefly demonstrate how our method for fostering joint sparsity can be incorporated into the GSBL framework \cite{tipping2001sparse,wipf2004sparse,glaubitz2022generalized}. 
Sparse Bayesian learning (SBL), first introduced in \cite{tipping2001sparse}, is a statistical approach that employs Bayesian inference to recover sparse solutions from indirect, incomplete, and noisy data. 
This technique is characterized by combining a conditional zero-mean Gaussian prior with a gamma hyper-prior for the precision of the Gaussian prior. 
Traditional SBL methods have predominantly operated under the sparsity assumption in the parameter vector $\mathbf{x}$. 
However, the recent work \cite{glaubitz2022generalized} broadened this framework by proposing that sparsity can also apply to some linear transformation of the parameter vector, denoted as $R \mathbf{x}$. 
Here, $R$ is permitted to possess a non-trivial kernel, provided the common kernel condition $\ker(F) \cap \ker(R) = {\mathbf{0}}$ holds. 
This extension led to the development of the GSBL approach. 
Within this context, we now further evolve the GSBL method to encourage \emph{joint} sparsity in the case of MMVs corresponding to jointly sparse parameter vectors. 

As highlighted in the introduction, both IAS and GSBL are established cases of sparsity-promoting algorithms that can benefit from our joint sparsity-promoting priors in the presence of MMV data. 
Importantly, these priors are not restricted to the discussed algorithms, as they can also be employed to enhance the performance of other sparsity-promoting MAP estimators, demonstrating their versatile applicability.

\subsection{The hierarchical Bayesian model}
\label{sub:GSBL_model}

The main difference between the hierarchical model discussed in \Cref{sec:model} and the one underlying GSBL is that the latter treats the diagonal entries of the precision (inverse covariance) matrix $D_{\boldsymbol{\theta}}$ as gamma distributed random variables. 
In this case, the joint prior is 
\begin{equation}\label{eq:joint_prior_GSBL} 
	\pi_{\mathbf{X}_{1:L} | \boldsymbol{\Theta}}( \mathbf{x}_{1:L} | \boldsymbol{\theta} ) 
		\propto \det( D_{\boldsymbol{\theta}} )^{L/2} \exp\left( -\frac{1}{2} \sum_{l=1}^L \| D_{\boldsymbol{\theta}}^{1/2} R \mathbf{x}_l \|_2^2 \right) 
\end{equation}
rather than \cref{eq:joint_prior} and the gamma hyper-prior is 
\begin{equation}\label{eq:hyper_priors_GSBL} 
	\pi_{\boldsymbol{\Theta}}(\boldsymbol{\theta}) 
		= \prod_{k=1}^K \mathcal{GG}( \theta_k | 1, \beta, \vartheta_k ) 
		\propto \det(D_{\boldsymbol{\theta}})^{\beta - 1} \exp\left( - \sum_{k=1}^K \theta_k/\vartheta_k \right) 
\end{equation}
rather than \cref{eq:hyper_priors}.
We still assume the joint likelihood function \cref{eq:joint_likelihood}.

\subsection{Bayesian inference}
\label{sub:GSBL_inference}

We perform Bayesian inference for the GSBL model by again solving for the MAP estimate of its posterior. 
To this end, a block-coordinate descent approach similar to the IAS algorithm was recently investigated in \cite{glaubitz2022generalized} (also see \cite{xiao2023sequential}).
For the GSBL model above, the objective function $\mathcal{G}$ that is minimized by the MAP estimate is 
\begin{equation}\label{eq:G_GSBL}
\begin{aligned} 
	\mathcal{G}( \mathbf{x}_{1:L}, \boldsymbol{\theta} ) 
		= \frac{1}{2} \left( \sum_{l=1}^L \| F_l \mathbf{x}_l - \mathbf{y}_l \|_2^2 
			+ \| D_{\boldsymbol{\theta}}^{1/2} R \mathbf{x}_l  \|_2^2 \right) 
			+ \sum_{k=1}^K \frac{\theta_k}{\vartheta_k} 
			+ ( -L/2 + 1 - \beta ) \sum_{k=1}^K \log( \theta_k ) 
\end{aligned}	
\end{equation}
up to constants that neither depend on $\mathbf{x}_{1:L}$ nor $\boldsymbol{\theta}$. 
We again minimize $\mathcal{G}$ by alternatingly (i) minimizing $\mathcal{G}$ w.r.t.\ $\mathbf{x}_{1:L}$ for fixed $\boldsymbol{\theta}$ and (ii) minimizing $\mathcal{G}$ w.r.t.\ $\boldsymbol{\theta}$ for fixed $\mathbf{x}_{1:L}$.  
In the case of GSBL, updating $\mathbf{x}_{1:L}$ given $\boldsymbol{\theta}$ reduces to solving the quadratic optimization problems 
\begin{equation}\label{eq:x_update_GSBL} 
	\mathbf{x}_l 
		= \argmin_{\mathbf{x}} \left\{ \| F_l \mathbf{x} - \mathbf{y}_l \|_2^2 + \| D_{\boldsymbol{\theta}}^{1/2} R \mathbf{x} \|_2^2 \right\}, \quad 
		l=1,\dots,L.
\end{equation}
Moreover, using the same arguments as in \cref{sub:beta_update}, the minimizer of $\mathcal{G}$ w.r.t.\ $\boldsymbol{\theta}$ for fixed $\mathbf{x}_{1:L}$ is 
\begin{equation}\label{eq:update_theta_GSBL} 
	\theta_k = \frac{ L/2 - 1 + \beta }{ \sum_{l=1}^L [R \mathbf{x}_l]_k^2/2 + \vartheta_k^{-1} }, \quad k=1,\dots,K.
\end{equation}  
We refer to \cite{glaubitz2022generalized} for more details. 
\cref{algo:MMV_GSBL} summarizes the above procedure to approximate the MAP estimate of the joint-sparsity-promoting GSBL model above. 
Henceforth we refer to this method as the \emph{MMV-GSBL algorithm}.

\begin{algorithm}[h!]
\caption{The MMV-GSBL algorithm}\label{algo:MMV_GSBL} 
\begin{algorithmic}[1]
    \STATE{Choose model parameters $(\beta,\boldsymbol{\vartheta})$ and initialize $\boldsymbol{\theta}$} 
    \REPEAT
		\STATE{Update the parameter vectors $\mathbf{x}_{1:L}$ (in parallel) according to \cref{eq:x_update_GSBL}}
		\STATE{Update the hyper-parameters $\boldsymbol{\theta}$ according to \cref{eq:update_theta_GSBL}} 
    \UNTIL{convergence or the maximum number of iterations is reached}
\end{algorithmic}
\end{algorithm}

\begin{remark}[Uncertainty quantification]\label{rem:GSBL_UQ}
	We can partially quantify uncertainty in the parameter vectors recovered by the MMV-GSBL method described in \cref{algo:MMV_GSBL} by following the discussion in \cref{sub:UQ_IAS}. 
	The only difference to the MMV-IAS algorithm is that the covariance matrices $\Gamma_l$ in \cref{eq:posterior_x3} become $\Gamma_l = ( F_l^T F_l + R^T D_{\boldsymbol{\theta}} R )$ in the MMV-GSBL framework. 
\end{remark}

\subsection{Analysis}
\label{sub:GSBL_analysis} 

The analysis carried out for the MMV-IAS model and algorithm in \Cref{sec:analysis} can be extended to the MMV-GSBL model and algorithm. 
Both algorithms share a similar computational complexity of $\mathcal{O}(I (L N^2 + K N))$. 
However, as mentioned in previous studies  \cite{wipf2004sparse,glaubitz2022generalized}, the GSBL cost function can exhibit non-convexity with multiple local minima. 
This non-convexity is expected to persist in the MMV-GSBL cost function as well.

\subsection{Connection to existing methods} 
\label{sub:GSBL_connection}

Recovering jointly sparse signals from MMV data using SBL was considered in \cite{wipf2007empirical}. 
The MMV-SBL method proposed \cite{wipf2007empirical} has certain limitations, however, including restrictions on the forward operators, the noise distribution, and the requirement of sparse parameter vectors. 
Furthermore, the evidence approach used in \cite{wipf2007empirical} can slow performance for large problems. 
In contrast, the MMV-GSBL algorithm (\cref{algo:MMV_GSBL}) is more efficient and flexible. 
It allows for varying forward operators, different noise distributions, and more general regularization operators promoting sparsity in an arbitrary linear transformation of the parameter vectors, 
This makes the proposed MMV-GSBL algorithm suitable for various MMV problems and large-scale parameter vectors.

 \section{Numerical results} 
\label{sec:numerics} 

We conduct numerical experiments to showcase the effectiveness of our joint-sparsity-promoting MMV-IAS and MMV-GSBL algorithms, detailed in \cref{algo:MMV_IAS,algo:MMV_GSBL}.
For a fair comparison, we also evaluate the individual signal recovery performance using the traditional IAS and GSBL algorithms with the same model parameters. 
The MATLAB code used to generate the numerical tests can be found in the code repository \url{https://github.com/jglaubitz/LeveragingJointSparsity}.

\subsection{Hyper-prior parameter selection}
\label{sub:param_selection}

For all signal recovery problems, we either chose $(\beta,\vartheta) = ( 1, 1.501, 10^{-2} )$ for the IAS algorithm and $(\beta,\vartheta) =  ( 1, L/2+1.501, 10^{-2} )$ for the MMV-IAS algorithm, resulting in globally convex objective functions, or $(r,\beta,\vartheta) = (-1,1,10^{-4})$ for the IAS and MMV-IAS algorithm, resulting in non-convex objective functions. 
Moreover, we use $(\beta,\vartheta) = (1,10^{3})$ for the GSBL and MMV-GSBL algorithm, resulting in a non-convex objective function. 
Similar parameters were used in \cite{glaubitz2022generalized,xiao2022sequential} and \cite{calvetti2020sparse,calvetti2020sparsity} for the GSBL and IAS algorithm, respectively. 
We did not attempt to optimize any of these parameters.

\subsection{Signal deblurring}
\label{sub:deblurring}

We first consider (jointly) deblurring four piecewise-constant signals with a shared edge profile. 
The signals are generated by fixing five transition points in the interval $[0,1]$, dividing $[0,1]$ into six constant subintervals on which the signals are constant, and then randomly assigning signal values drawn from a uniform distribution.
The values are then normalized such that the maximum value of each signal is set to $1$. 
\cref{fig:deb_signal1_intro,fig:deb_signal2_intro} in \Cref{sec:introduction} show the first two signals and the given noisy blurred measurements. 
We aim to recover the nodal values $\mathbf{x}_{1:4}$ of all four signals at $N = 40$ equidistant grid points. 
The corresponding data model is 
\begin{equation}\label{eq:deblurring_model}
    \mathbf{y}_l = F \mathbf{x}_l + \mathbf{e}_l, \quad 
    l=1,\dots,4.
\end{equation} 
The discrete forward operator, $F$, represents the application of the midpoint quadrature to the convolution equation 
\begin{equation} 
	y(s) = \int_0^1 k(s-s') x(s) \intd s',
\end{equation}
where we assume a Gaussian convolution kernel of the form 
\begin{equation} 
	k(s) = \frac{1}{2 \pi \gamma^2} \exp\left( - \frac{s^2}{2 \gamma^2} \right)
\end{equation} 
with blurring parameter $\gamma = 3 \cdot 10^{-2}$. 
The forward operator is then given by 
\begin{equation}\label{eq:disc_convolution}
	[F]_{m,n} = h k( h[i-j] ), \quad i,j=1,\dots,n,
\end{equation}
where $h=1/n$ is the distance between consecutive grid points. 
Note that $F$ has full rank but quickly becomes ill-conditioned. 
The noise vectors $\mathbf{e}_{1:4}$ in \cref{eq:deblurring_model} are i.i.d., with zero mean and a common variance $\sigma^2 = 10^{-2}$. 
To reflect our prior knowledge that the signals are piecewise constant, we use  
\begin{equation}\label{eq:deblurring_R}
	R = 
    \begin{bmatrix}
        -1 & 1 & & \\ 
         & \ddots & \ddots & \\ 
         & & -1 & 1 
    \end{bmatrix} 
    \in \R^{(n-1) \times n} 
\end{equation} 
for the sparsifying operator. 
\cref{fig:deb_signal1_IAS_rp1_L4,fig:deb_signal2_IAS_rp1_L4} show the recovered first two signals using the IAS algorithm (to promote sparsity separately) and the proposed MMV-IAS algorithm (to promote sparsity jointly) for $r=1$, resulting in a globally convex objective function. 
\cref{fig:deb_signal1_IAS_rm1_L4,fig:deb_signal2_IAS_rm1_L4} show the same results for $r=-1$, resulting in a non-convex objective function.
\cref{fig:deb_signal1_GSBL_L4,fig:deb_signal2_GSBL_L4} report on the same test using the GSBL and MMV-GSBL algorithms.  
The results demonstrate that incorporating joint sparsity into the IAS and GSBL algorithms improves signal recovery accuracy. 

\begin{figure}[tb]
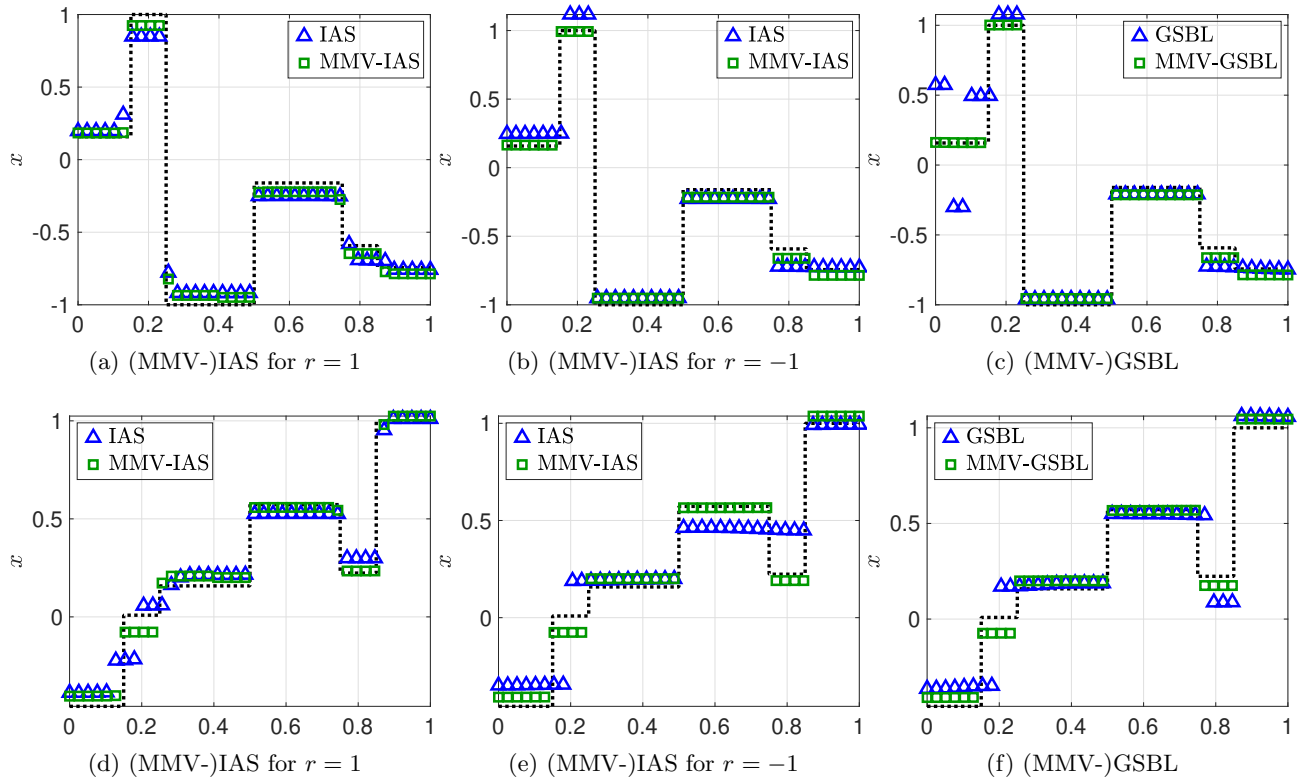

	\centering
  	\begin{subfigure}[b]{0.33\textwidth}
		\includegraphics[width=\textwidth]{%
      		figures/deb_signal1_IAS_rp1_L4} 
    	\caption{(MMV-)IAS for $r=1$}
    	\label{fig:deb_signal1_IAS_rp1_L4}
  	\end{subfigure}%
  	\begin{subfigure}[b]{0.33\textwidth}
		\includegraphics[width=\textwidth]{%
      		figures/deb_signal1_IAS_rm1_L4} 
    	\caption{(MMV-)IAS for $r=-1$}
    	\label{fig:deb_signal1_IAS_rm1_L4}
  	\end{subfigure}%
	\begin{subfigure}[b]{0.33\textwidth}
		\includegraphics[width=\textwidth]{%
      		figures/deb_signal1_GSBL_L4} 
    	\caption{(MMV-)GSBL}
    	\label{fig:deb_signal1_GSBL_L4}
  	\end{subfigure}%
	\\
	\begin{subfigure}[b]{0.33\textwidth}
		\includegraphics[width=\textwidth]{%
      		figures/deb_signal2_IAS_rp1_L4} 
    	\caption{(MMV-)IAS for $r=1$}
    	\label{fig:deb_signal2_IAS_rp1_L4}
  	\end{subfigure}%
  	\begin{subfigure}[b]{0.33\textwidth}
		\includegraphics[width=\textwidth]{%
      		figures/deb_signal2_IAS_rm1_L4} 
    	\caption{(MMV-)IAS for $r=-1$}
    	\label{fig:deb_signal2_IAS_rm1_L4}
  	\end{subfigure}%
	\begin{subfigure}[b]{0.33\textwidth}
		\includegraphics[width=\textwidth]{%
      		figures/deb_signal2_GSBL_L4} 
    	\caption{(MMV-)GSBL}
    	\label{fig:deb_signal2_GSBL_L4}
  	\end{subfigure}%
  	\caption{ 
	Different reconstructions of the first (top row) and second (bottom row) of four piecewise constant signals with a common edge profile and noisy blurred measurements using the existing IAS/GSBL algorithm to separately recover them (blue triangles) and the proposed MMV-IAS/-GSBL algorithm to jointly recover them (green squares). 
  	}
  	\label{fig:deb_signal}
\end{figure}

\begin{figure}[tb]
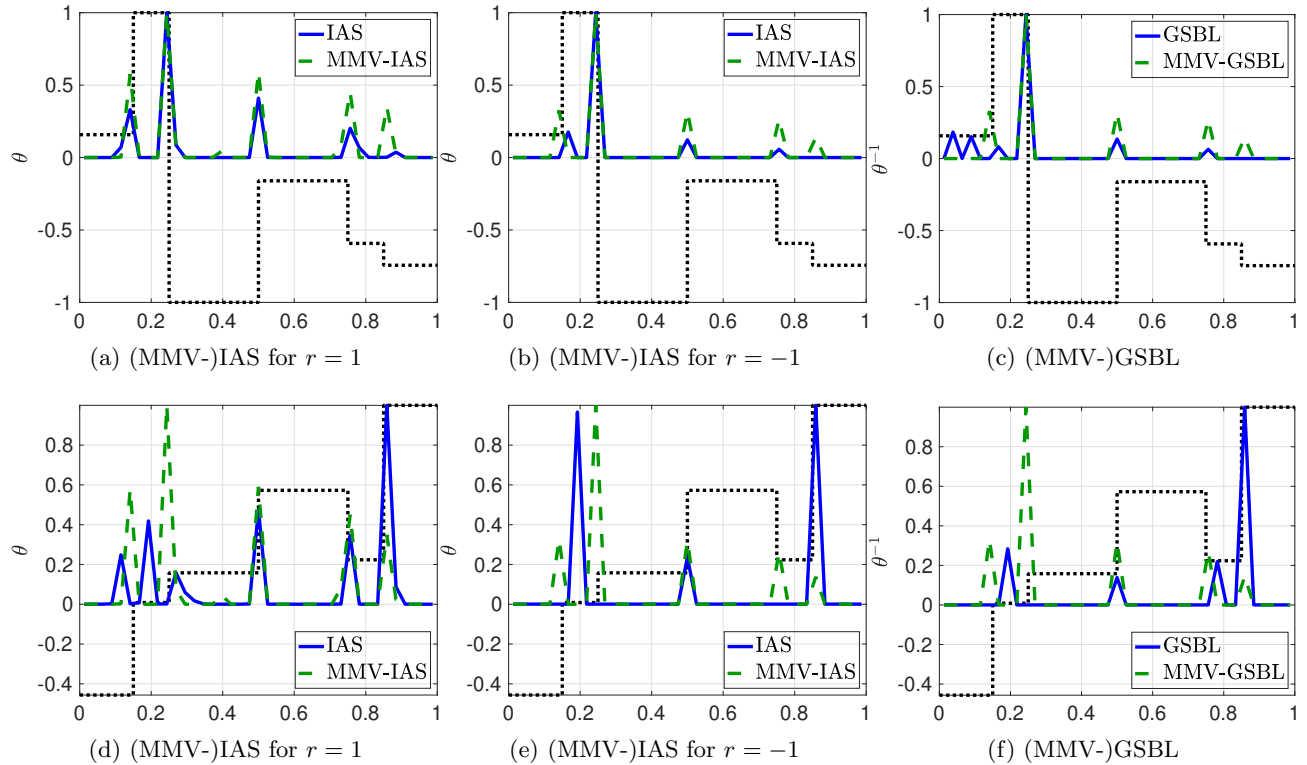

	\centering 
	\begin{subfigure}[b]{0.33\textwidth}
		\includegraphics[width=\textwidth]{%
      		figures/deb_signal1_IAS_rp1_L4_theta} 
    	\caption{(MMV-)IAS for $r=1$}
    	\label{fig:deb_signal1_IAS_rp1_L4_theta}
  	\end{subfigure}%
  	\begin{subfigure}[b]{0.33\textwidth}
		\includegraphics[width=\textwidth]{%
      		figures/deb_signal1_IAS_rm1_L4_theta} 
    	\caption{(MMV-)IAS for $r=-1$}
    	\label{fig:deb_signal1_IAS_rm1_L4_theta}
  	\end{subfigure}%
	\begin{subfigure}[b]{0.33\textwidth}
		\includegraphics[width=\textwidth]{%
      		figures/deb_signal1_GSBL_L4_theta} 
    	\caption{(MMV-)GSBL}
    	\label{fig:deb_signal1_GSBL_L4_theta}
  	\end{subfigure}%
	\\
	\begin{subfigure}[b]{0.33\textwidth}
		\includegraphics[width=\textwidth]{%
      		figures/deb_signal2_IAS_rp1_L4_theta} 
    	\caption{(MMV-)IAS for $r=1$}
    	\label{fig:deb_signal2_IAS_rp1_L4_theta}
  	\end{subfigure}%
  	\begin{subfigure}[b]{0.33\textwidth}
		\includegraphics[width=\textwidth]{%
      		figures/deb_signal2_IAS_rm1_L4_theta} 
    	\caption{(MMV-)IAS for $r=-1$}
    	\label{fig:deb_signal2_IAS_rm1_L4_theta}
  	\end{subfigure}%
  	\begin{subfigure}[b]{0.33\textwidth}
		\includegraphics[width=\textwidth]{%
      		figures/deb_signal2_GSBL_L4_theta} 
    	\caption{(MMV-)GSBL}
    	\label{fig:deb_signal2_GSBL_L4_theta}
  	\end{subfigure}%
  	\caption{ 
	Normalized MAP estimate of the hyper-parameter $\theta$ for the first (top row) and second (bottom row) signal using the IAS and MMV-IAS algorithms with $r=\pm1$ and the GSBL and MMV-GSBL algorithms 
	}
  	\label{fig:deb_signal_theta}
\end{figure}

The improved accuracy of the MMV-IAS and MMV-GSBL algorithms can be attributed to the use of a common hyper-parameter vector $\boldsymbol{\theta}$ that more accurately detects edge locations compared to separate hyper-parameter vectors $\boldsymbol{\theta}_{1:L}$ used in the IAS and GSBL algorithms. 
This is evident in \cref{fig:deb_signal_theta}, which shows the normalized estimated hyper-parameters produced by the IAS/GSBL and MMV-IAS/GSBL algorithms for the first two signals. 
While the MMV-IAS and MMV-GSBL algorithms accurately capture all edge locations, the IAS and GSBL algorithms produce visibly erroneous hyper-parameter profiles. 
The impact of missed true edge locations and false detection of others are clearly visible in \cref{fig:deb_signal1_GSBL_L4}, which shows that the MMV-GSBL approach eliminates the artificial edges around $0.1$ on the horizontal axis. 
Similarly, \cref{fig:deb_signal2_GSBL_L4} demonstrates that the MMV-GSBL approach retains the existing edges around $0.2$ on the horizontal axis, which are missed by the GSBL approach. 

\begin{figure}[tb]
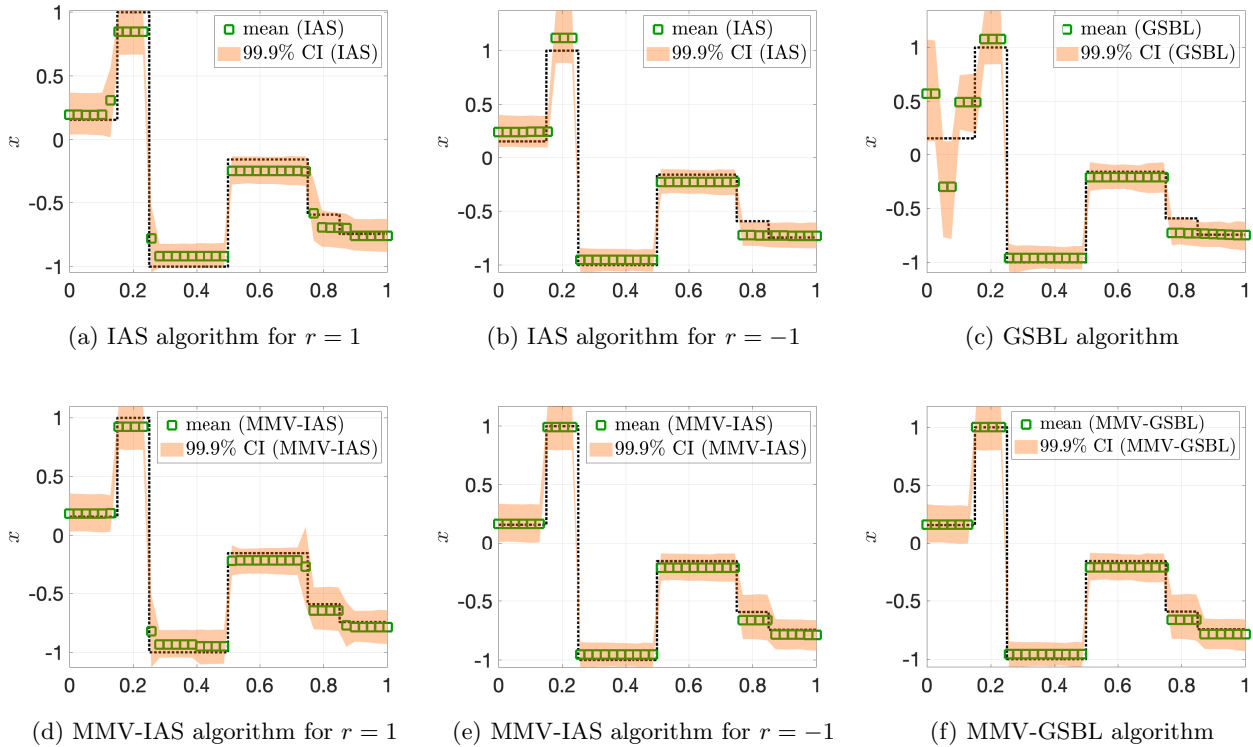

	\centering
	\begin{subfigure}[b]{0.33\textwidth}
		\includegraphics[width=\textwidth]{%
      		figures/deb_signal1_IAS_rp1_L4_CI} 
    	\caption{IAS algorithm for $r=1$}
    	\label{fig:deb_signal1_IAS_rp1_L4_CI}
  	\end{subfigure}%
	\begin{subfigure}[b]{0.33\textwidth}
		\includegraphics[width=\textwidth]{%
      		figures/deb_signal1_IAS_rm1_L4_CI} 
    	\caption{IAS algorithm for $r=-1$}
    	\label{fig:deb_signal1_IAS_rm1_L4_CI}
  	\end{subfigure}%
  	\begin{subfigure}[b]{0.33\textwidth}
		\includegraphics[width=\textwidth]{%
      		figures/deb_signal1_GSBL_L4_CI} 
    	\caption{GSBL algorithm}
    	\label{fig:deb_signal1_GSBL_L4_CI}
  	\end{subfigure}%
	\\
	\begin{subfigure}[b]{0.33\textwidth}
		\includegraphics[width=\textwidth]{%
      		figures/deb_signal1_MMVIAS_rp1_L4_CI} 
    	\caption{MMV-IAS algorithm for $r=1$}
    	\label{fig:deb_signal1_MMVIAS_rp1_L4_CI}
  	\end{subfigure}%
	\begin{subfigure}[b]{0.33\textwidth}
		\includegraphics[width=\textwidth]{%
      		figures/deb_signal1_MMVIAS_rm1_L4_CI} 
    	\caption{MMV-IAS algorithm for $r=-1$}
    	\label{fig:deb_signal1_MMVIAS_rm1_L4_CI}
  	\end{subfigure}%
  	\begin{subfigure}[b]{0.33\textwidth}
		\includegraphics[width=\textwidth]{%
      		figures/deb_signal1_MMVGSBL_L4_CI} 
    	\caption{MMV-GSBL algorithm}
    	\label{fig:deb_signal1_MMVGSBL_L4_CI}
  	\end{subfigure}%
  	\caption{ 
	The $99.9\%$ credible intervals (CIs) for the recovered first (top row) and second (bottom row) signal conditioned on the MAP estimate of the hyper-parameter vector $\boldsymbol{\theta}^{\rm MAP}$ using the IAS and MMV-IAS algorithm with $r=\pm1$ as well as the GSB and MMV-GSBL algorithm
  	}
  	\label{fig:deb_signal_CI}
\end{figure}

The proposed MMV-IAS and MMV-GSBL algorithms have the additional advantage of quantifying uncertainty in the recovered signals, as described in \cref{sub:UQ_IAS} and \cref{rem:GSBL_UQ}. 
This is demonstrated in \cref{fig:deb_signal_CI}, which shows the $99.9\%$ credible intervals of the fully conditional posterior densities $\pi_{\mathbf{X}_1|\boldsymbol{\Theta}=\boldsymbol{\theta},\mathbf{Y}_{1:L}=\mathbf{y}_{1:L}}$ of the first recovered signal for the IAS, MMV-IAS, GSBL, and MMV-GSBL model.
Here, $\mathbf{y}_{1:L}$ are the given noisy blurred MMVs and $\boldsymbol{\theta}$ is the estimated hyper-parameter vector.

\subsection{Error and success analysis}
\label{sub:analysis}

We now conduct a synthetic sparse signal recovery experiment to further assess the performance of the proposed joint-sparsity-promoting MMV-IAS and MMV-GSBL algorithms. 
We consider $L$ randomly generated signals, $\mathbf{x}_{1:L}$, each of size $N$. 
We fix the number of measurements, $M$, non-zero components, $s$, and trials, $T$. 
For each trial, $t=1,\dots,T$, we proceed as follows:
\begin{enumerate}
	\item[(i)] 
	Generate a support set $S \subset \{ 1,\dots,N \}$ uniformly at random with size $|S| = s$;
	
	\item[(ii)] 
	Define signal vectors $\mathbf{x}_1,\dots,\mathbf{x}_L$ such that $\mathrm{supp}(\mathbf{x}_1) = \dots = \mathrm{supp}(\mathbf{x}_L) = S$, where the non-zero entries are drawn from the standard normal distribution; 
	
	\item[(iii)] 
	Generate a forward operator $F$ as described below, fix a noise variance $\sigma^2$, and compute the measurement vectors $\mathbf{y}_l = F \mathbf{x}_l + \mathbf{e}_l$, where $\mathbf{e}_l$ is drawn from $\mathcal{N}(\mathbf{0},\sigma^2 I)$;
	
	\item[(iv)]
	Compute the reconstructions $\hat{\mathbf{x}}_1,\dots,\hat{\mathbf{x}}_L$ using the desired algorithm (e.g., IAS or MMV-IAS); 
	 
	\item[(v)] 
	Compute the normalized error $E_t = \sqrt{ \sum_{l=1}^L \| \mathbf{x}_l - \hat{\mathbf{x}}_l \|_2^2 / \sum_{l=1}^L \| \mathbf{x}_l \|_2^2 }$ for each algorithm.
	
\end{enumerate}
Finally, we evaluate the algorithm performance using the average error and empirical success probability (ESP). 
The \emph{average error} is $E = (E_1+\dots+E_T)/T$, i.e., the average of the individual trial errors. 
The \emph{ESP} is the fraction of trials that successfully recovered the vectors $\mathbf{x}_1,\dots,\mathbf{x}_L$ up to a given tolerance $\varepsilon_{\rm tol}$, i.e., $E_t < \varepsilon_{\rm tol}$. 
We use a subsampled discrete cosine transform (DFT) as the forward operator $F$. 
This mimics the situation in which Fourier data are collected (e.g., synthetic aperture radar and magnetic resonance imaging), but some of the data are determined to be unusable due to a system malfunction or obstruction.
Specifically, we generate a set $\Omega \subset \{1,\dots,N\}$ of size $M$ uniformly at random and let 
\begin{equation} 
	F = P_{\Omega} A, 
\end{equation} 
where $A \in \R^{N \times N}$ is the DCT matrix and $P_{\Omega} \in \R^{M \times N}$ is the operator selecting rows of $A$ corresponding to the indices in $\Omega$. 
The identity matrix is used as the sparsifying operator as the signals are assumed to be sparse. 
In our experiments we set $N=100$, {$s=20$}, $T=10$, $\sigma^2=10^{-6}$, and $\varepsilon_{\rm tol}=10^{-2}$. 

\begin{figure}[tb]
	\centering
  	\begin{subfigure}[b]{0.33\textwidth}
		\includegraphics[width=\textwidth]{%
      		figures/comparision_error_IAS_C4} 
    	\caption{Average errors, $L=4$}
    	\label{fig:comparision_error_IAS_C4}
  	\end{subfigure}%
  	\begin{subfigure}[b]{0.33\textwidth}
		\includegraphics[width=\textwidth]{%
      		figures/comparision_error_IAS_C8} 
    	\caption{Average errors, $L=8$}
    	\label{fig:comparision_error_IAS_C8}
  	\end{subfigure}%
	\begin{subfigure}[b]{0.33\textwidth}
		\includegraphics[width=\textwidth]{%
      		figures/comparision_error_IAS_C16} 
    	\caption{Average errors, $L=16$}
    	\label{fig:comparision_error_IAS_C16}
  	\end{subfigure}%
	\\
	\begin{subfigure}[b]{0.33\textwidth}
		\includegraphics[width=\textwidth]{%
      		figures/comparision_succ_IAS_C4} 
    	\caption{Success probability, $L=4$}
    	\label{fig:comparision_succ_IAS_C4}
  	\end{subfigure}%
  	\begin{subfigure}[b]{0.33\textwidth}
		\includegraphics[width=\textwidth]{%
      		figures/comparision_succ_IAS_C8} 
    	\caption{Success probability, $L=8$}
    	\label{fig:comparision_succ_IAS_C8}
  	\end{subfigure}%
  	\begin{subfigure}[b]{0.33\textwidth}
		\includegraphics[width=\textwidth]{%
      		figures/comparision_succ_IAS_C16} 
    	\caption{Success probability, $L=16$}
    	\label{fig:comparision_succ_IAS_C16}
  	\end{subfigure}%
  	\caption{ 
	Comparison of the average error and success probability for the sparsity-promoting IAS algorithm (blue triangles) and joint-sparsity-promoting MMV-IAS algorithm (green squares), both for $r=-1$. 
	We recover a signal of size $N=100$ with $s=20$ non-zero entries from an increasing number of measurements $m$. 
  	}
  	\label{fig:comparision_IAS}
\end{figure}

\begin{figure}[tb]
	\centering
  	\begin{subfigure}[b]{0.33\textwidth}
		\includegraphics[width=\textwidth]{%
      		figures/comparision_error_GSBL_C4} 
    	\caption{Average errors, $L=4$}
    	\label{fig:comparision_error_GSBL_C4}
  	\end{subfigure}%
  	\begin{subfigure}[b]{0.33\textwidth}
		\includegraphics[width=\textwidth]{%
      		figures/comparision_error_GSBL_C8} 
    	\caption{Average errors, $L=8$}
    	\label{fig:comparision_error_GSBL_C8}
  	\end{subfigure}%
	\begin{subfigure}[b]{0.33\textwidth}
		\includegraphics[width=\textwidth]{%
      		figures/comparision_error_GSBL_C16} 
    	\caption{Average errors, $L=16$}
    	\label{fig:comparision_error_GSBL_C16}
  	\end{subfigure}%
	\\
	\begin{subfigure}[b]{0.33\textwidth}
		\includegraphics[width=\textwidth]{%
      		figures/comparision_succ_GSBL_C4} 
    	\caption{Success probability, $L=4$}
    	\label{fig:comparision_succ_GSBL_C4}
  	\end{subfigure}%
  	\begin{subfigure}[b]{0.33\textwidth}
		\includegraphics[width=\textwidth]{%
      		figures/comparision_succ_GSBL_C8} 
    	\caption{Success probability, $L=8$}
    	\label{fig:comparision_succ_GSBL_C8}
  	\end{subfigure}%
  	\begin{subfigure}[b]{0.33\textwidth}
		\includegraphics[width=\textwidth]{%
      		figures/comparision_succ_GSBL_C16} 
    	\caption{Success probability, $L=16$}
    	\label{fig:comparision_succ_GSBL_C16}
  	\end{subfigure}%
  	\caption{ 
	Comparison of the average error and success probability for the sparsity-promoting GSBL algorithm (blue triangles) and joint-sparsity-promoting MMV-GSBL algorithm (green squares). 
	We recover a signal of size $N=100$ with $s=20$ non-zero entries from an increasing number of measurements $m$. 
	}
  	\label{fig:comparision_GSBL}
\end{figure}  

\begin{figure}[tb]
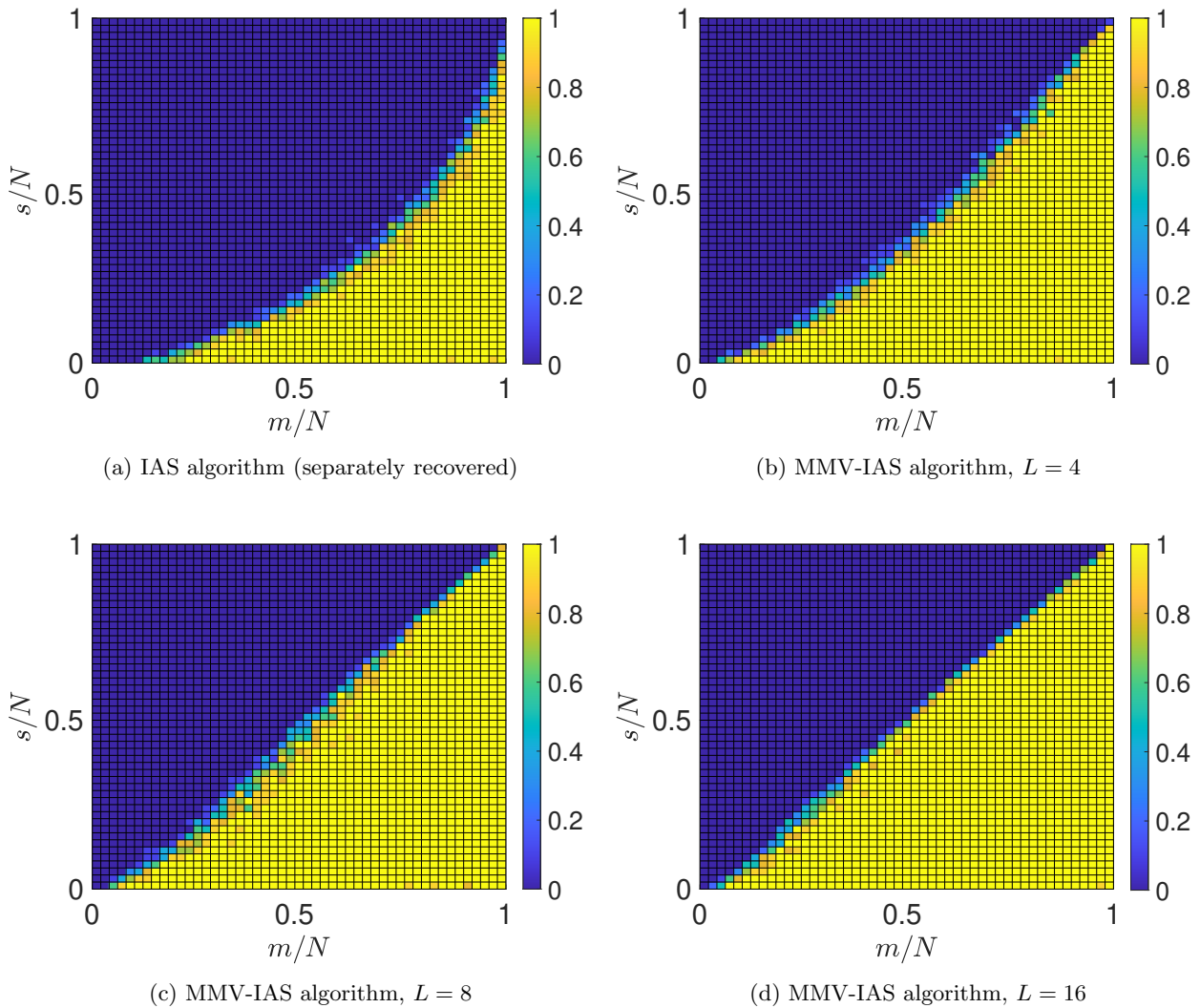

	\centering
  	\begin{subfigure}[b]{0.49\textwidth}
		\includegraphics[width=\textwidth]{%
      		figures/phase_IAS_rm1_L4} 
    	\caption{IAS algorithm (separately recovered)}
    	\label{fig:phase_IAS_rm1_L4}
  	\end{subfigure}
  	\begin{subfigure}[b]{0.49\textwidth}
		\includegraphics[width=\textwidth]{%
      		figures/phase_MMVIAS_rm1_L4} 
    	\caption{MMV-IAS algorithm, $L=4$}
    	\label{fig:phase_MMVIAS_rm1_L4}
  	\end{subfigure}
	\\
  	\begin{subfigure}[b]{0.49\textwidth}
		\includegraphics[width=\textwidth]{%
      		figures/phase_MMVIAS_rm1_L8} 
    	\caption{MMV-IAS algorithm, $L=8$}
    	\label{fig:phase_MMVIAS_L8}
  	\end{subfigure}
  	\begin{subfigure}[b]{0.49\textwidth}
		\includegraphics[width=\textwidth]{%
      		figures/phase_MMVIAS_rm1_L16} 
    	\caption{MMV-IAS algorithm, $L=16$}
    	\label{fig:phase_MMVIAS_L16}
  	\end{subfigure}
  	\caption{ 
	Phase transition diagrams for the IAS and MMV-IAS algorithm for $r=-1$ and $L = 4,8,16$. 
	The diagrams show the success probability for values $1 \leq s \leq N$ and $1 \leq M \leq N$. 
  	}
  	\label{fig:phase}
\end{figure} 

The performance of the proposed joint-sparsity-promoting MMV-IAS and MMV-GSBL algorithms is evaluated and compared to the existing IAS and GSBL algorithms in \cref{fig:comparision_IAS,fig:comparision_GSBL}. 
These figures report the average errors and ESP for different numbers of MMVs ($L = 4, 8, 16$). 
For brevity, we only report on the IAS and MMV-IAS results for $r=-1$. 
The results show that the proposed algorithms outperform the existing ones regarding average errors and ESP in most cases. 
As the number of MMVs increases, the superiority of the proposed algorithms becomes more pronounced. 
For instance, when $L=16$, the MMV-IAS algorithm requires only around $40$ measurements per signal for successful recovery, while the IAS algorithm requires around $70$. 
The phase transition plots in \cref{fig:phase} further demonstrate the improved performance of the MMV-IAS algorithm, which exhibits a phase transition close to the optimal $m = s$ line. 
The phase transition profiles for the MMV-GSBL algorithm are similar to the MMV-IAS algorithm but are not included here for brevity.

\subsection{Application to parallel magnetic resonance imaging} 
\label{sub:parallel_MRI} 

We next apply the proposed MMV-IAS and MMV-GSBL algorithms to a parallel MRI test problem. 
Parallel MRI is a multi-sensor acquisition system that uses multiple coils to simultaneously acquire image measurements for recovery. 
Details on parallel MRI can be found in \cite{guerquin2011realistic,chun2015efficient,chun2017compressed,adcock2019joint}. 
A standard discrete data model for parallel MRI is the following: 
Let $\mathbf{x} \in \C^N$ be the vectorized image to be recovered and $L$ be the number of coils. 
For the $l$th coil, the measurements acquired are 
\begin{equation}\label{eq:pMRI_model}
	\mathbf{y}_l = P_{\Omega_l} F \mathbf{x} + \mathbf{e}_l, 
\end{equation} 
where $F \in \C^{N \times N}$ is the discrete Fourier transform (DFT) matrix, $P_{\Omega_l} \in \C^{M \times N}$ is a sampling operator that selects the rows of $F$ corresponding to the frequencies in $\Omega_l$, and $\mathbf{e}_l \in \C^M$ is noise. 
The image measurement acquired by each of the $L$ coils is intrinsic to the particular coil. 
A typical sampling procedure for parallel MRI is to use data taken as radial line sampling in the Fourier space. 
\cref{fig:pMRI_sampling} illustrates the radial sampling maps corresponding to four different coils. 

\begin{figure}[tb]
	\centering
  	\includegraphics[width=\textwidth]{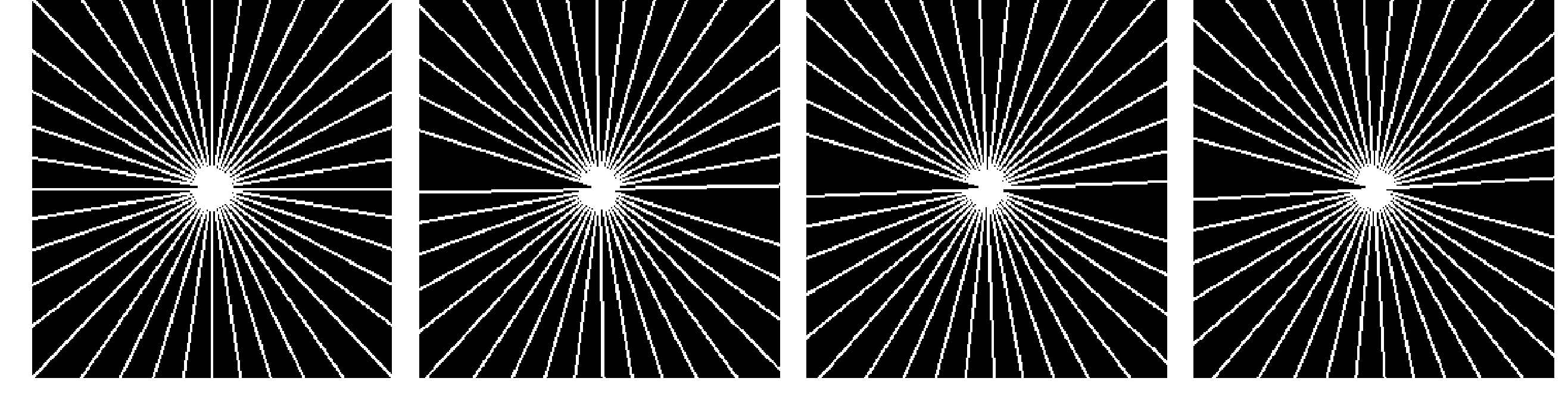} 
  	\caption{Four different radial sampling maps}
  	\label{fig:pMRI_sampling}
\end{figure} 

\begin{figure}[tb]
	\centering
  	\begin{subfigure}[b]{0.32\textwidth}
		\includegraphics[width=\textwidth]{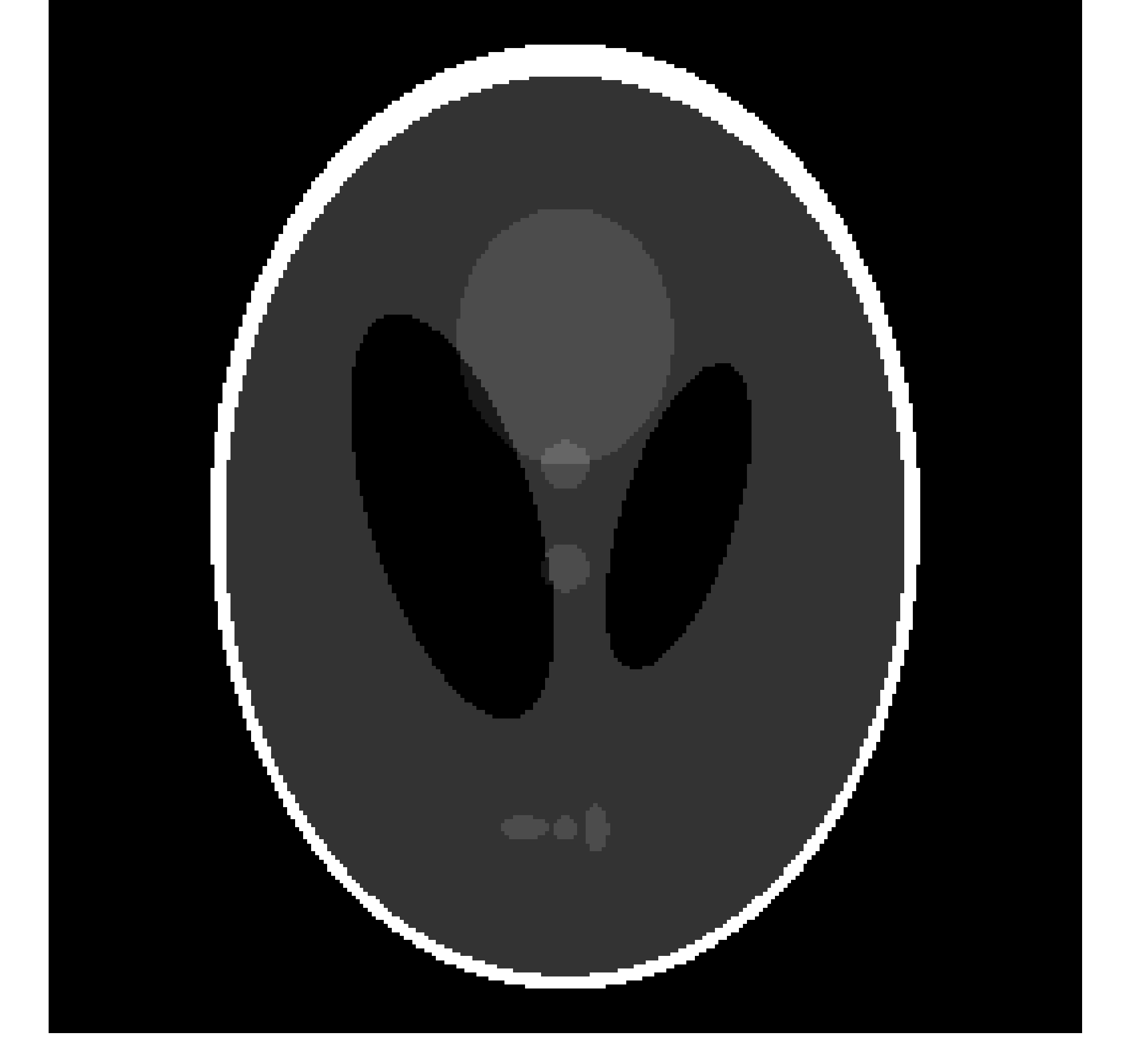} 
		\caption{Reference image}
    		\label{fig:pMRI_ref} 
  	\end{subfigure}
	\begin{subfigure}[b]{0.32\textwidth}
		\includegraphics[width=\textwidth]{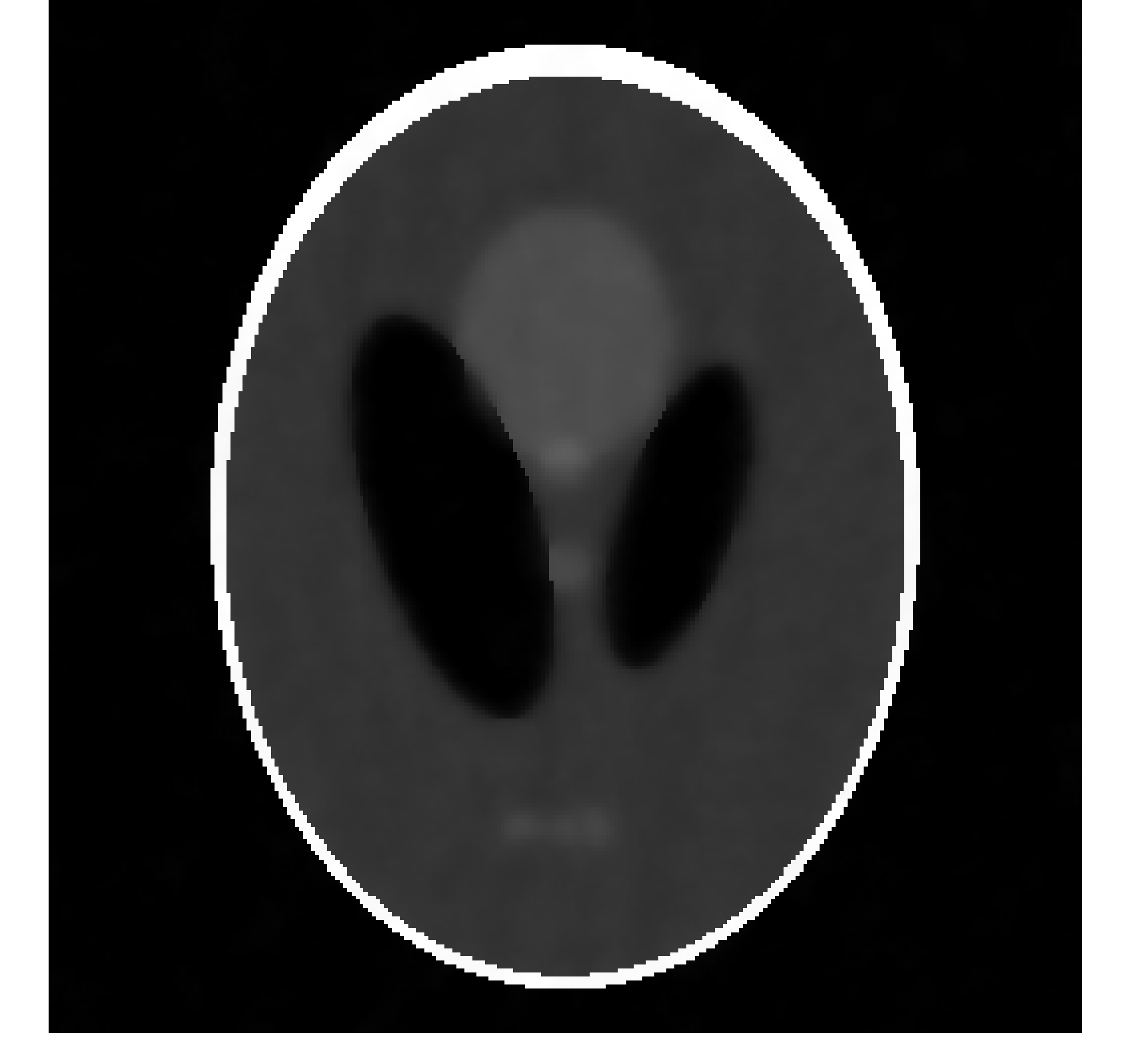} 
		\caption{IAS reconstruction}
		\label{fig:pMRI_coil1_IAS} 
  	\end{subfigure}
	\begin{subfigure}[b]{0.32\textwidth}
		\includegraphics[width=\textwidth]{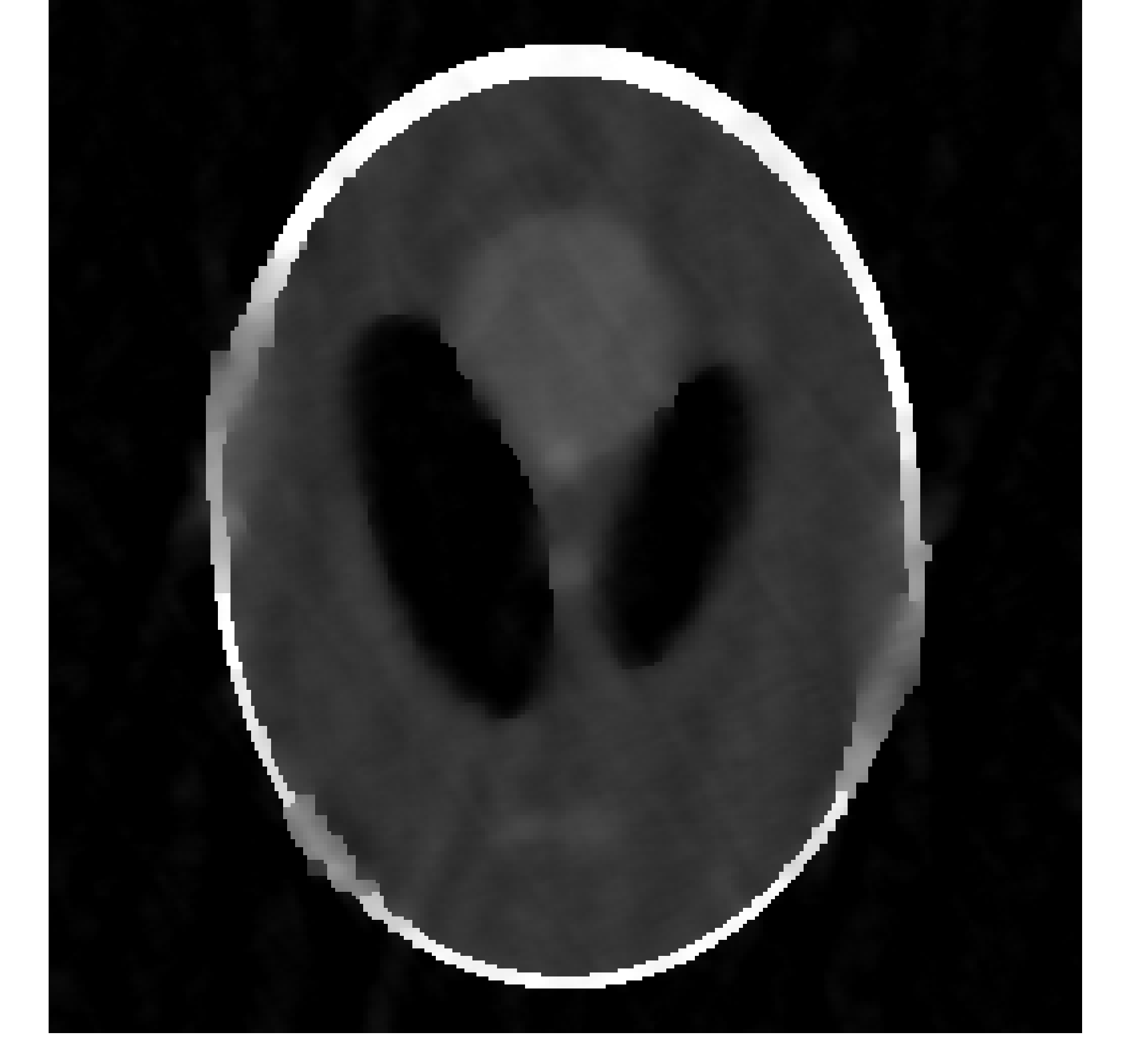} 
		\caption{GSBL reconstruction}
		\label{fig:pMRI_coil1_GSBL}
  	\end{subfigure}
	\\ 
	\begin{subfigure}[b]{0.32\textwidth}
		\includegraphics[width=\textwidth]{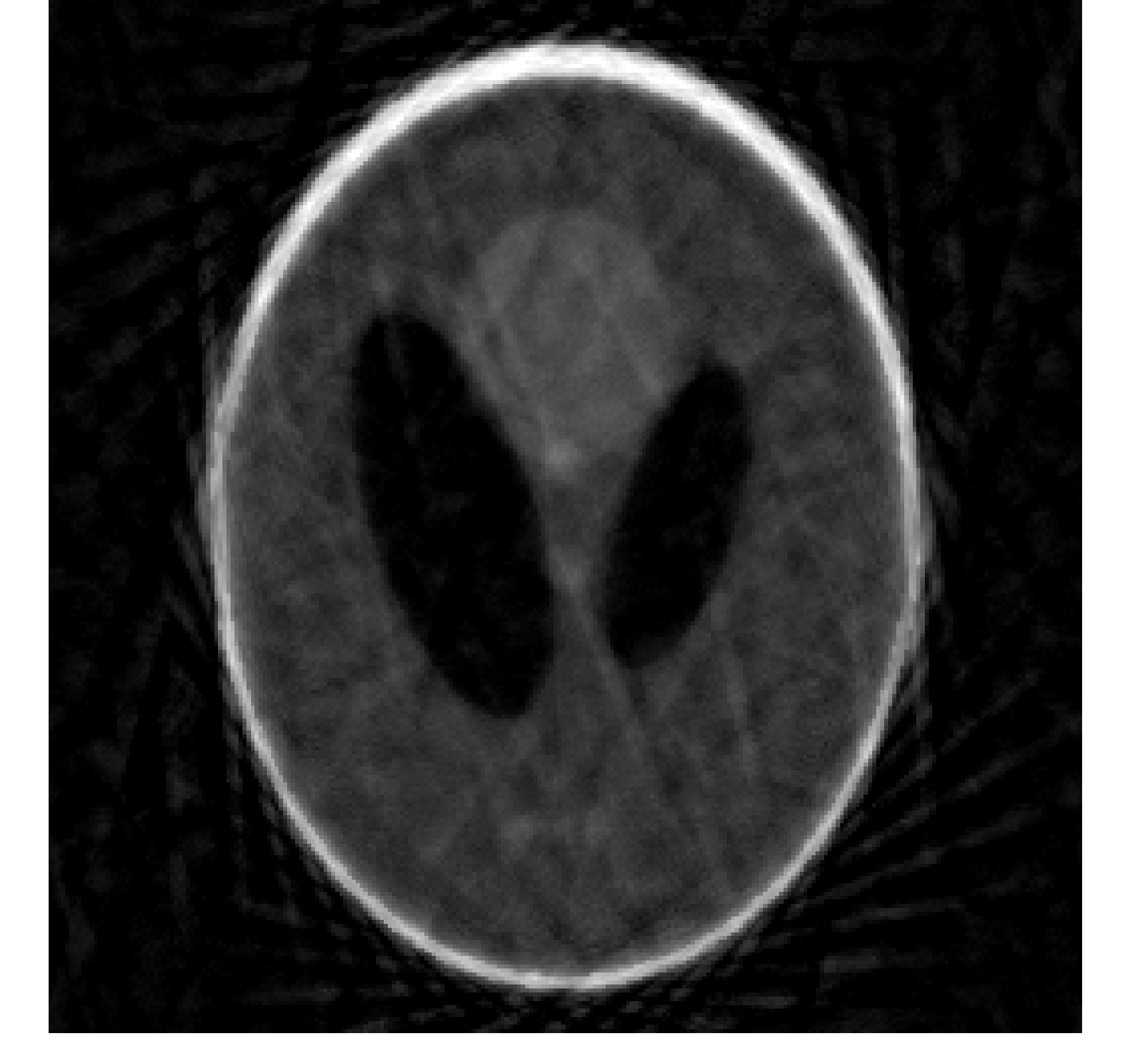} 
		\caption{Least squares reconstruction}
		\label{fig:pMRI_coil1_LS} 
  	\end{subfigure}
	\begin{subfigure}[b]{0.32\textwidth}
		\includegraphics[width=\textwidth]{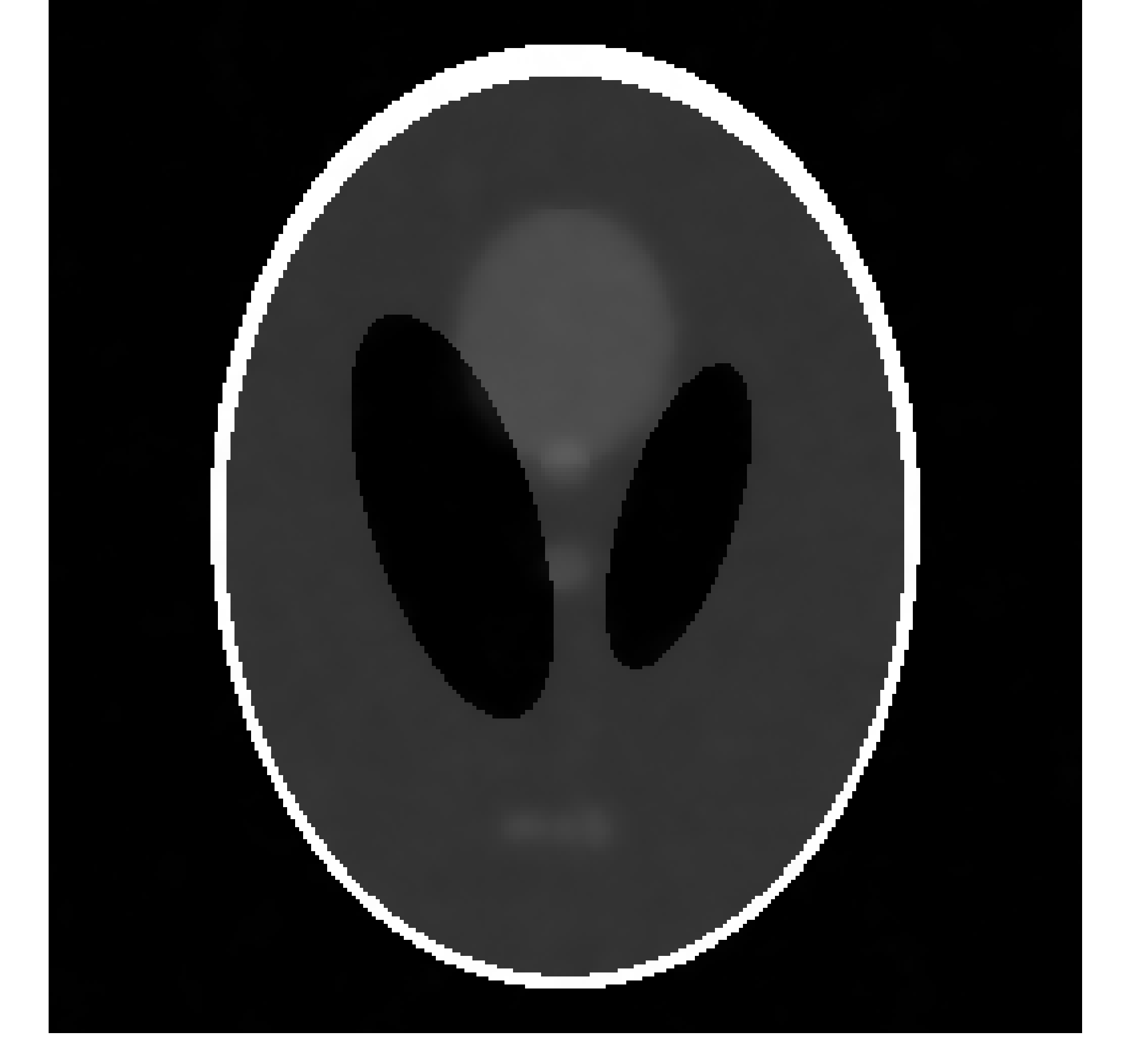} 
		\caption{MMV-IAS reconstruction}
		\label{fig:pMRI_coil1_MMVIAS} 
  	\end{subfigure}
	\begin{subfigure}[b]{0.32\textwidth}
		\includegraphics[width=\textwidth]{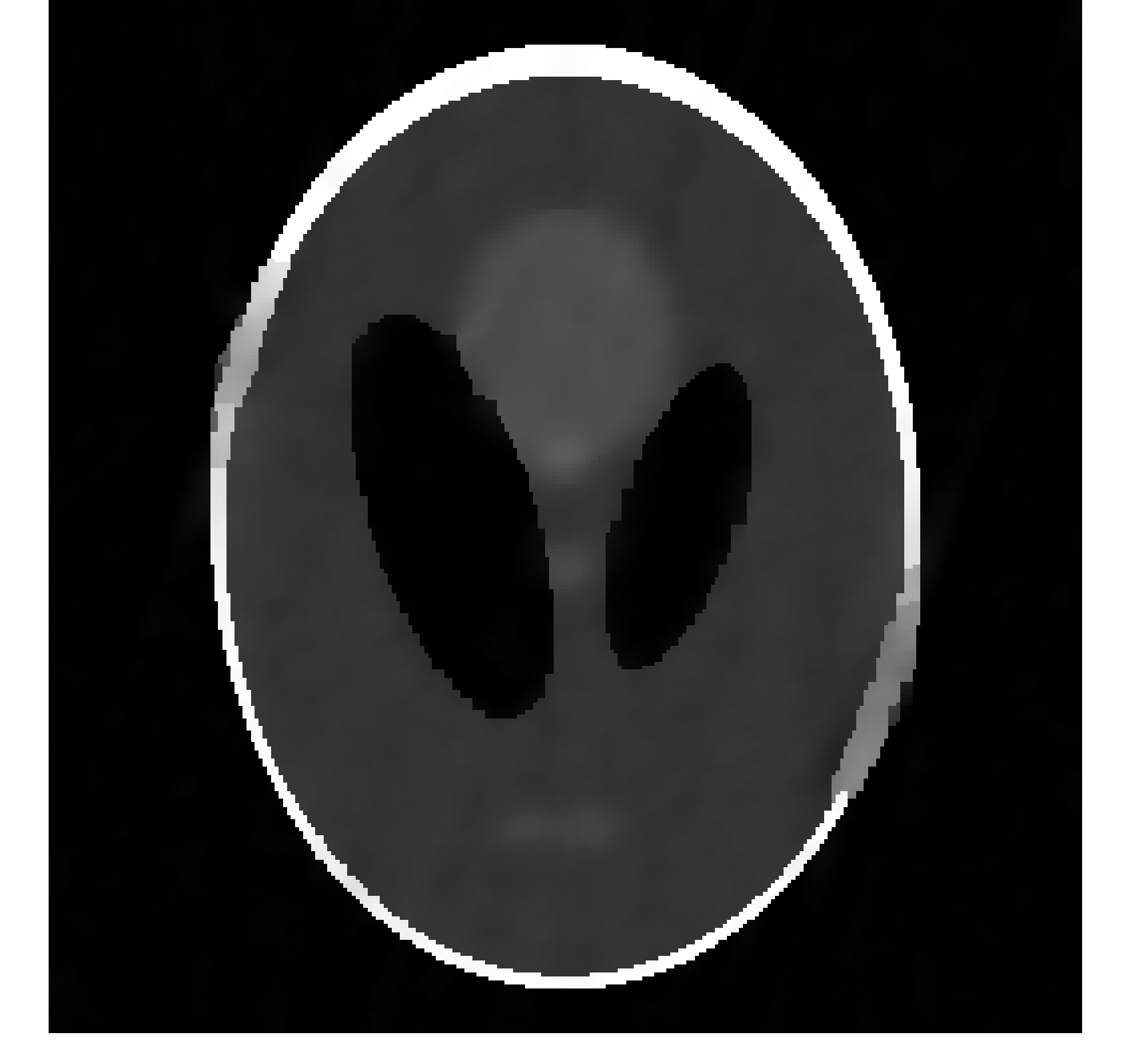} 
		\caption{MMV-GSBL reconstruction}
		\label{fig:pMRI_coil1_MMVGSBL} 
  	\end{subfigure}
	\caption{ 
	Reference image and the reconstructed first coil images
  	}
  	\label{fig:pMRI_coil_images}
\end{figure} 

\begin{figure}[tb]
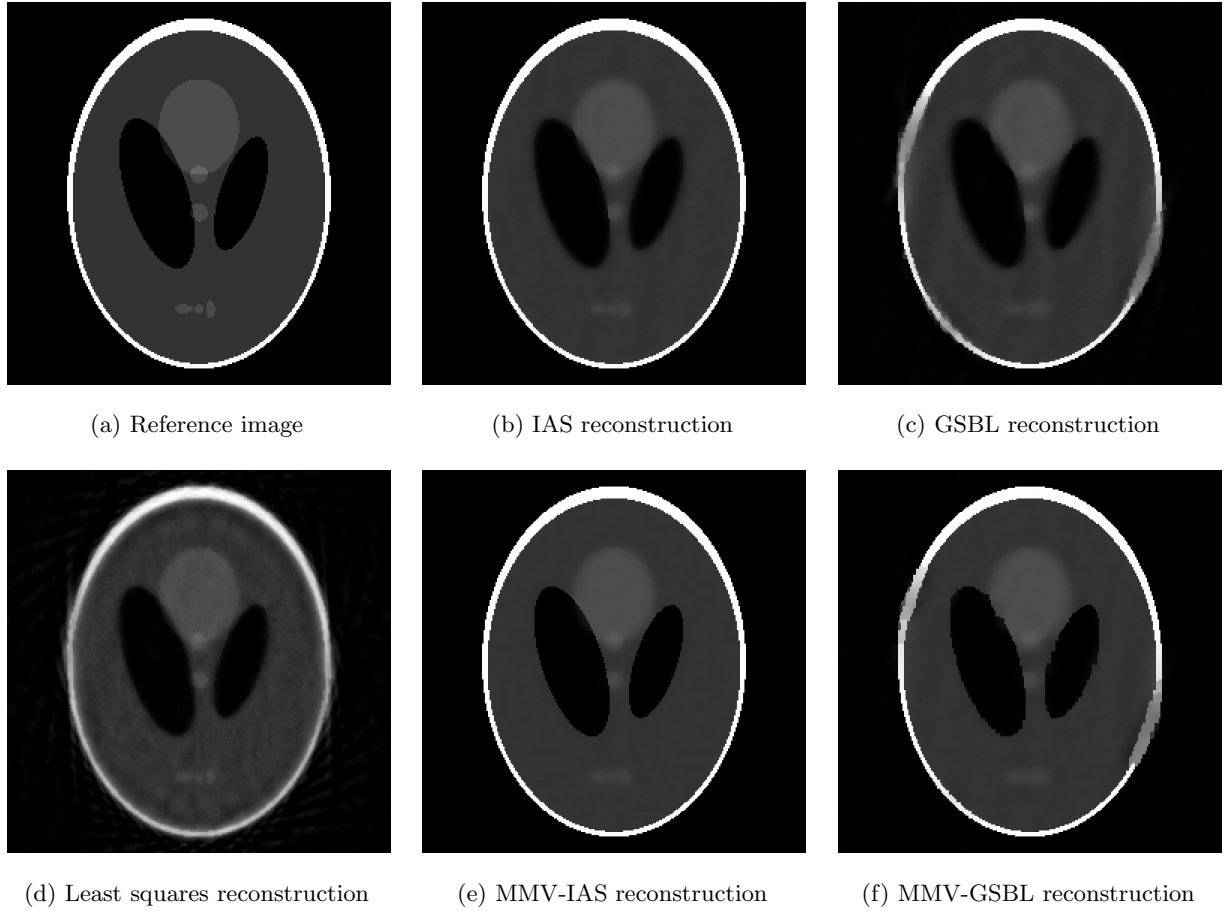

	\centering
  	\begin{subfigure}[b]{0.32\textwidth}
		\includegraphics[width=\textwidth]{%
      		figures/pMRI_ref} 
		\caption{Reference image}
		\label{fig:pMRI_overall_ref} 
  	\end{subfigure}
	\begin{subfigure}[b]{0.32\textwidth}
		\includegraphics[width=\textwidth]{%
      		figures/pMRI_overall_IAS} 
		\caption{IAS reconstruction}
		\label{fig:pMRI_overall_IAS}
  	\end{subfigure}
	\begin{subfigure}[b]{0.32\textwidth}
		\includegraphics[width=\textwidth]{%
      		figures/pMRI_overall_GSBL} 
		\caption{GSBL reconstruction}
		\label{fig:pMRI_overall_GSBL}
  	\end{subfigure}
	\\ 
	\begin{subfigure}[b]{0.32\textwidth}
		\includegraphics[width=\textwidth]{%
      		figures/pMRI_overall_LS} 
		\caption{Least squares reconstruction}
		\label{fig:pMRI_overall_LS}
  	\end{subfigure}
	\begin{subfigure}[b]{0.32\textwidth}
		\includegraphics[width=\textwidth]{%
      		figures/pMRI_overall_MMVIAS} 
		\caption{MMV-IAS reconstruction} 
		\label{fig:pMRI_overall_MMVIAS} 
  	\end{subfigure}
	\begin{subfigure}[b]{0.32\textwidth}
		\includegraphics[width=\textwidth]{%
      		figures/pMRI_overall_MMVGSBL} 
		\caption{MMV-GSBL reconstruction} 
		\label{fig:pMRI_overall_MMVGSBL} 
  	\end{subfigure}
  	\caption{ 
	Reference image and the reconstructed overall images. 
	For all methods, $20$ lines/angles (corresponding to around $16\%$ sampling of the k-space), $\sigma^2 = 10^{-3}$, and $L=4$ coils were used.
  	}
  	\label{fig:pMRI_overall_images}
\end{figure} 

\begin{figure}[tb]
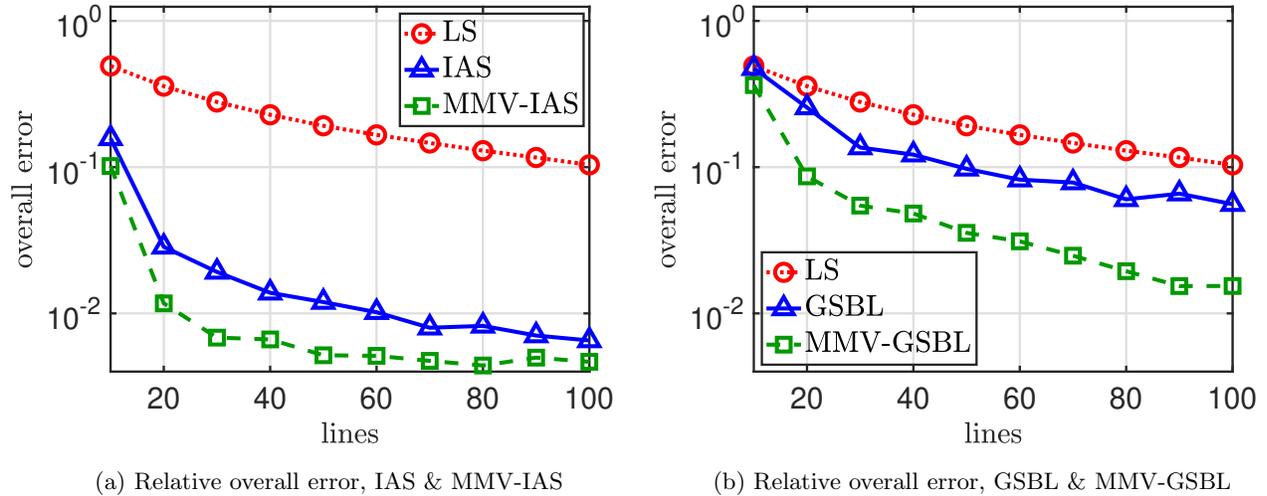

	\centering
	\begin{subfigure}[b]{0.495\textwidth}
		\includegraphics[width=\textwidth]{%
      		figures/error_overall_IAS} 
		\caption{Relative overall error, IAS \& MMV-IAS} 
		\label{fig:error_overall_IAS}
  	\end{subfigure}
	\begin{subfigure}[b]{0.495\textwidth}
		\includegraphics[width=\textwidth]{%
      		figures/error_overall_GSBL} 
		\caption{Relative overall error, GSBL \& MMV-GSBL} 
		\label{fig:error_overall_GSBL}
  	\end{subfigure}
  	\caption{ 
	Relative error of the recovered overall image using the least squares (LS) approach, the existing IAS/GSBL algorithm, and the proposed MMV-IAS/GSBL method.
	In all cases, we used $L=4$ coils, noise variance $\sigma^2 = 10^{-3}$, and a varying number of lines. 
  	}
  	\label{fig:pMRI_errors}
\end{figure} 

Many techniques have been proposed for parallel MRI, see \cite{chun2015efficient} and references therein. 
Here we focus on the coil-by-coil approach, first computing the approximate coil images $\mathbf{\hat{x}}_{1:L}$ from \cref{eq:pMRI_model} and then computing an approximation $\mathbf{\hat{x}}$ to the overall image by considering the average of the coil images, i.e., $\mathbf{\hat{x}} = (\mathbf{\hat{x}}_{1} + \dots + \mathbf{\hat{x}}_{L})/L$. 
In our experiment, we compare the recovery of the $256 \times 256$ Shepp--Logan phantom image in \cref{fig:pMRI_ref} using the least-squares (LS) approach, the existing IAS/GSBL algorithm, and the proposed MMV-GSBL/IAS algorithm to recover the coil images. 
For brevity, we only report on the IAS and MMV-IAS results for $r=-1$.
We used the anisotropic first-order discrete gradient operator as the sparsifying operator. 
\cref{fig:pMRI_coil_images} shows the recovered first coil images for $20$ lines, noise variance $\sigma^2 = 10^{-3}$, and $L=4$ coils.
The recovered first coil images using the proposed MMV-IAS (\cref{fig:pMRI_coil1_MMVIAS}) and MMV-GSBL (\cref{fig:pMRI_coil1_MMVGSBL}) algorithms are visibly more accurate than using the corresponding IAS (\cref{fig:pMRI_coil1_IAS}) and GSBL (\cref{fig:pMRI_coil1_GSBL}) algorithms.  
Note the sharper transitions between internal structures. 
Consequently, the proposed MMV-IAS/GSBL algorithm also yields a more accurate approximation to the overall image, compared to the existing IAS/GSBL algorithm, which is demonstrated in \cref{fig:pMRI_overall_images}.
To further assess the performance of the proposed joint-sparsity-promoting MMV-IAS and MMV-GSBL algorithms, \cref{fig:pMRI_errors} reports the relative error of the recovered overall image for varying numbers of lines sampled in the Fourier space. 
The proposed MMV-IAS/GSBL algorithm to jointly recover the coil images consistently yields the smallest error.

\subsection{Comparison with a sequential approach} 
\label{sub:num_seq}

Comparing the proposed MMV-IAS/GSBL algorithm solely with the traditional IAS/GSBL algorithm for separate recovery of parameter vectors may not be entirely equitable.  
The MMV-IAS/GSBL algorithm leverages information from all parameter vectors to reconstruct each individually. 
In contrast, the traditional IAS/GSBL does not facilitate information sharing across different parameter vectors. 
Consequently, we also compare the MMV-IAS/GSBL algorithm with a sequential variant of the IAS/GSBL algorithm. 
In the sequential IAS/GSBL algorithm, we determine the $l$th parameter vector $\mathbf{x}_l$ and its corresponding hyper-parameter vector $\boldsymbol{\theta}_l$ by approximating the MAP estimate of the $l$th posterior $\pi_{ \mathbf{X}_l, \boldsymbol{\Theta}_l }( \mathbf{x}_l, \boldsymbol{\theta}_l ) \propto \pi_{ \mathbf{Y}_{l} | \mathbf{X}_{l} }( \mathbf{y}_{l} | \mathbf{x}_{l} ) \, \pi_{\mathbf{X}_l | \boldsymbol{\Theta}_l}( \mathbf{x}_l | \boldsymbol{\theta}_l ) \, \pi_{ \boldsymbol{\Theta}_l }( \boldsymbol{\theta}_l )$---as it is done in the IAS/GSBL algorithm. 
However, unlike the traditional IAS/GSBL algorithm, the initial value for $\boldsymbol{\theta}_l$ in the corresponding block-coordinate descent method is chosen as the MAP estimate $\boldsymbol{\theta}_{l-1}^{\rm MAP}$ derived from the previously learned parameter vector. 
This approach is reminiscent of strategies employed in time-dependent problems where data are received in sequential batches. 
For the first parameter vector, the IAS/GSBL and the sequential IAS/GSBL algorithms start with the same initialization for $\boldsymbol{\theta}_1$. 
For this reason, we do not report on the first parameter vector in the subsequent numerical tests. 
However, from the second parameter vector onward, their initializations diverge. 
The sequential approach ensures that the insights obtained from the previous measurement vector $\mathbf{y}_l$ and the learned $\mathbf{x}_l,\boldsymbol{\theta}_l$ are not disregarded, and as such facilitate a reasonable comparison for the proposed MMV-IAS/GSBL algorithm.

\begin{figure}[tb]
	\centering
  	\begin{subfigure}[b]{0.33\textwidth}
		\includegraphics[width=\textwidth]{%
      		figures/comp_signal2_IAS_rp1} 
    	\caption{Reconstructions, $2$nd signal}
    	\label{fig:comp_signal2_IAS_rp1}
  	\end{subfigure}%
	\begin{subfigure}[b]{0.33\textwidth}
		\includegraphics[width=\textwidth]{%
      		figures/comp_signal3_IAS_rp1} 
    	\caption{Reconstructions, $3$rd signal}
    	\label{fig:comp_signal3_IAS_rp1}
  	\end{subfigure}%
	\begin{subfigure}[b]{0.33\textwidth}
		\includegraphics[width=\textwidth]{%
      		figures/comp_signal4_IAS_rp1} 
    	\caption{Reconstructions, $4$th signal}
    	\label{fig:comp_signal4_IAS_rp1}
  	\end{subfigure}%
	\\
  	\begin{subfigure}[b]{0.33\textwidth}
		\includegraphics[width=\textwidth]{%
      		figures/comp_signal2_IAS_rp1_theta} 
    	\caption{Hyper-parameters, $2$nd signal}
    	\label{fig:comp_signal2_IAS_rp1_theta}
  	\end{subfigure}%
	\begin{subfigure}[b]{0.33\textwidth}
		\includegraphics[width=\textwidth]{%
      		figures/comp_signal3_IAS_rp1_theta} 
    	\caption{Hyper-parameters, $3$rd signal}
    	\label{fig:comp_signal3_IAS_rp1_theta}
  	\end{subfigure}%
	\begin{subfigure}[b]{0.33\textwidth}
		\includegraphics[width=\textwidth]{%
      		figures/comp_signal4_IAS_rp1_theta} 
    	\caption{Hyper-parameters, $4$th signal}
    	\label{fig:comp_signal4_IAS_rp1_theta}
  	\end{subfigure}%
  	\caption{ 
	Reconstructions (top row) and normalized hyper-parameter estimates $\theta$ (bottom row) for the last three of four piecewise constant signals with a common edge profile. 
	We compare the IAS algorithm (blue triangles), the sequential IAS algorithm (red stars), and the MMV-IAS algorithm (green squares). 
	All methods use a generalized gamma hyper-prior with $r=1$, ensuring globally convex objective functions. 
  	}
  	\label{fig:comp_IAS_rp1}
\end{figure}

\cref{fig:comp_IAS_rp1} compares the reconstructions and normalized hyper-parameter estimates for the last three of four piecewise constant signals with a common edge profile for the IAS, sequential IAS, and MMV-IAS algorithms. 
All methods use a generalized gamma hyper-prior with $r=1$, ensuring globally convex objective functions. 
The results reveal only minor differences between the IAS and sequential IAS algorithms in this particular scenario. 
The primary reason for this similarity is the global convexity of the objective function common to both algorithms. 
Despite their differing hyper-parameter initialization, they are both theoretically guaranteed to converge to the same unique minimum as the iteration count approaches infinity. 
The slight discrepancies observed between the IAS and sequential IAS algorithms, such as those noted in \cref{fig:comp_signal3_IAS_rp1}, can be attributed to the early termination of the block-coordinate descent method. 
In comparison, when juxtaposed with the IAS and sequential IAS algorithms, the MMV-IAS algorithm demonstrates improved performance, particularly in terms of more precise edge detection and overall signal recovery.

\begin{figure}[tb]
	\centering
  	\begin{subfigure}[b]{0.33\textwidth}
		\includegraphics[width=\textwidth]{%
      		figures/comp_signal2_IAS_rm1} 
    	\caption{Reconstructions, $2$nd signal}
    	\label{fig:comp_signal2_IAS_rm1}
  	\end{subfigure}%
	\begin{subfigure}[b]{0.33\textwidth}
		\includegraphics[width=\textwidth]{%
      		figures/comp_signal3_IAS_rm1} 
    	\caption{Reconstructions, $3$rd signal}
    	\label{fig:comp_signal3_IAS_rm1}
  	\end{subfigure}%
	\begin{subfigure}[b]{0.33\textwidth}
		\includegraphics[width=\textwidth]{%
      		figures/comp_signal4_IAS_rm1} 
    	\caption{Reconstructions, $4$th signal}
    	\label{fig:comp_signal4_IAS_rm1}
  	\end{subfigure}%
	\\
  	\begin{subfigure}[b]{0.33\textwidth}
		\includegraphics[width=\textwidth]{%
      		figures/comp_signal2_IAS_rm1_theta} 
    	\caption{Hyper-parameters, $2$nd signal}
    	\label{fig:comp_signal2_IAS_rm1_theta}
  	\end{subfigure}%
	\begin{subfigure}[b]{0.33\textwidth}
		\includegraphics[width=\textwidth]{%
      		figures/comp_signal3_IAS_rm1_theta} 
    	\caption{Hyper-parameters, $3$rd signal}
    	\label{fig:comp_signal3_IAS_rm1_theta}
  	\end{subfigure}%
	\begin{subfigure}[b]{0.33\textwidth}
		\includegraphics[width=\textwidth]{%
      		figures/comp_signal4_IAS_rm1_theta} 
    	\caption{Hyper-parameters, $4$th signal}
    	\label{fig:comp_signal4_IAS_rm1_theta}
  	\end{subfigure}%
  	\caption{ 
	Reconstructions (top row) and normalized hyper-parameter estimates $\theta$ (bottom row) for the last three of four piecewise constant signals with a common edge profile. 
	We compare the IAS algorithm (blue triangles), the sequential IAS algorithm (red stars), and the MMV-IAS algorithm (green squares). 
	All methods use a generalized gamma hyper-prior with $r = -1$, leading to non-convex objective functions.
  	}
  	\label{fig:comp_IAS_rm1}
\end{figure}

\cref{fig:comp_IAS_rm1} offers a comparison akin to the previous one, but under a generalized gamma hyper-prior with $r=-1$, leading to non-convex objective functions. 
In this scenario, there are significant differences between the IAS and sequential IAS algorithms, with the latter appearing to underperform. 
For example, as shown in \cref{fig:comp_signal2_IAS_rm1}, the sequential IAS algorithm fails to detect the final edge at $0.85$, whereas the traditional IAS algorithm successfully identifies it. Similar trends are evident in \cref{fig:comp_signal3_IAS_rm1,fig:comp_signal4_IAS_rm1}. 
These results suggest that initializing the hyper-parameter vector $\boldsymbol{\theta}_l$ in the block-coordinate descent method with the previous MAP estimate $\boldsymbol{\theta}_{l-1}^{\rm MAP}$ might steer the algorithm towards a less favorable local minimum compared to a fresh initialization. 
Altering the recovery sequence of the parameter vectors could improve the sequential IAS algorithm's performance. 
Nonetheless, unless guided by the problem's context, identifying an optimal order poses a challenging and non-trivial task. 
In contrast, when compared with the IAS and sequential IAS algorithms, the MMV-IAS algorithm consistently demonstrates superior performance, especially in terms of more precise edge detection and overall signal recovery.

\begin{figure}[tb]
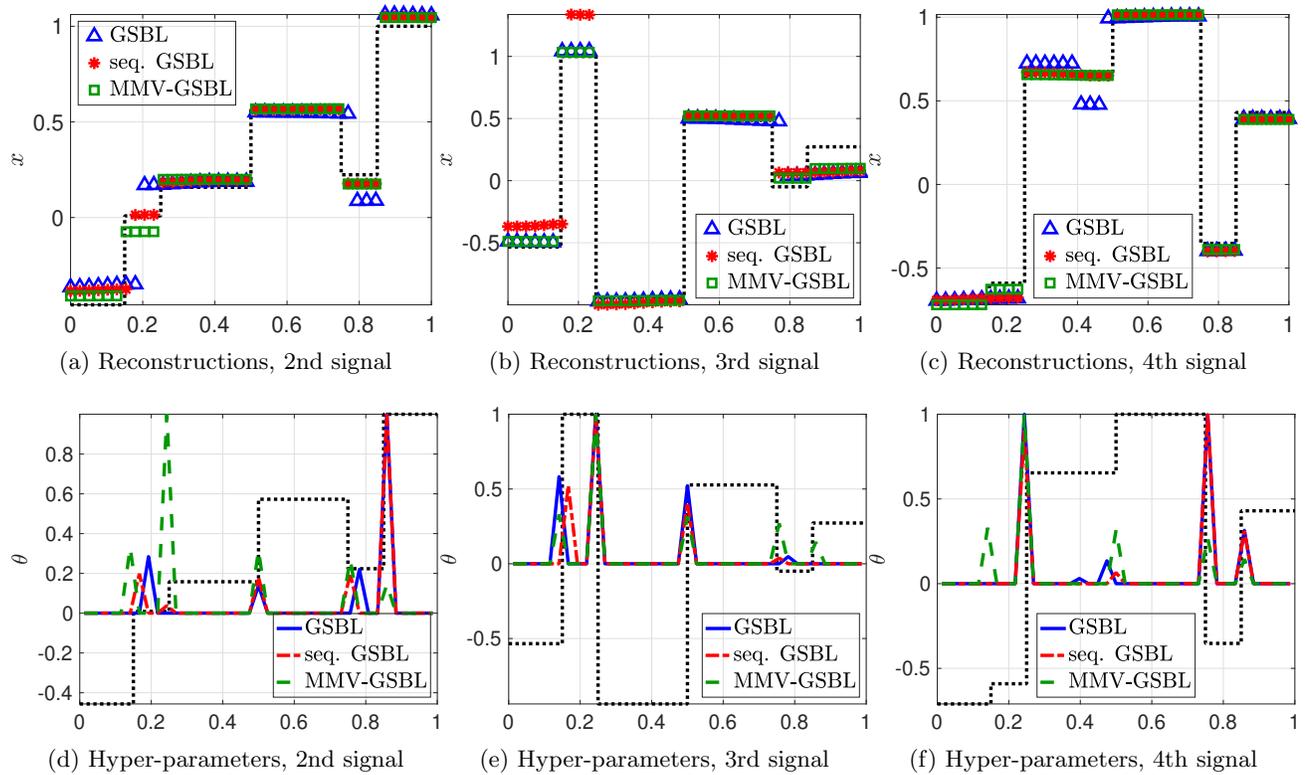

	\centering
  	\begin{subfigure}[b]{0.33\textwidth}
		\includegraphics[width=\textwidth]{%
      		figures/comp_signal2_GSBL} 
    	\caption{Reconstructions, $2$nd signal}
    	\label{fig:comp_signal2_GSBL}
  	\end{subfigure}%
	\begin{subfigure}[b]{0.33\textwidth}
		\includegraphics[width=\textwidth]{%
      		figures/comp_signal3_GSBL} 
    	\caption{Reconstructions, $3$rd signal}
    	\label{fig:comp_signal3_GSBL}
  	\end{subfigure}%
	\begin{subfigure}[b]{0.33\textwidth}
		\includegraphics[width=\textwidth]{%
      		figures/comp_signal4_GSBL} 
    	\caption{Reconstructions, $4$th signal}
    	\label{fig:comp_signal4_GSBL}
  	\end{subfigure}%
	\\
  	\begin{subfigure}[b]{0.33\textwidth}
		\includegraphics[width=\textwidth]{%
      		figures/comp_signal2_GSBL_theta} 
    	\caption{Hyper-parameters, $2$nd signal}
    	\label{fig:comp_signal2_GSBL_theta}
  	\end{subfigure}%
	\begin{subfigure}[b]{0.33\textwidth}
		\includegraphics[width=\textwidth]{%
      		figures/comp_signal3_GSBL_theta} 
    	\caption{Hyper-parameters, $3$rd signal}
    	\label{fig:comp_signal3_GSBL_theta}
  	\end{subfigure}%
	\begin{subfigure}[b]{0.33\textwidth}
		\includegraphics[width=\textwidth]{%
      		figures/comp_signal4_GSBL_theta} 
    	\caption{Hyper-parameters, $4$th signal}
    	\label{fig:comp_signal4_GSBL_theta}
  	\end{subfigure}%
  	\caption{ 
	Reconstructions (top row) and normalized hyper-parameter estimates $\theta$ (bottom row) for the last three of four piecewise constant signals with a common edge profile. 
	We compare the GSBL algorithm (blue triangles), the sequential GSBL algorithm (red stars), and the MMV-GSBL algorithm (green squares). 
  	}
  	\label{fig:comp_GSBL}
\end{figure}

In the concluding analysis, \cref{fig:comp_GSBL} provides a comparison analogous to the earlier ones, but this time for the GSBL, sequential GSBL, and MMV-GSBL algorithms, all of which lead to non-convex objective functions. 
In this case, moderate differences are observed between the GSBL and sequential GSBL algorithms. 
As illustrated in \cref{fig:comp_signal2_GSBL,fig:comp_signal4_GSBL}, the sequential GSBL algorithm produces improved outcomes compared to the GSBL algorithm, while in \cref{fig:comp_signal3_GSBL}, the GSBL algorithm demonstrates superior performance over its sequential counterpart. 
These findings suggest that initializing the hyper-parameter vector $\boldsymbol{\theta}_l$ in the block-coordinate descent method with the prior MAP estimate $\boldsymbol{\theta}_{l-1}^{\rm MAP}$ can variably influence the algorithm, leading it towards either a more or less favorable local minimum compared to a fresh initialization. 
In contrast, the MMV-GSBL algorithm exhibits superior overall performance. 
\section{Concluding remarks} 
\label{sec:summary} 

We presented a hierarchical Bayesian approach for inferring parameter vectors from MMVs that promotes joint sparsity. 
The method involves using separate conditionally Gaussian priors for each parameter vector and common hyper-parameters to enforce a common sparsity profile among the parameter vectors. 
Based on this joint-sparsity-promoting hierarchy, new algorithms, MMV-IAS and MMV-GSBL, were developed and demonstrated to outperform existing IAS and GSBL algorithms in several test cases.
Our findings show that incorporating joint sparsity into the current hierarchical Bayesian methodology can significantly improve its performance. 
The concept of joint-sparsity-promoting priors is flexible and can be extended beyond the conditionally Gaussian priors and (generalized) gamma hyper-priors used in the present study.

Future work will explore the potential of the proposed joint-sparsity-promoting approach by extending it to other hierarchical prior models, such as horseshoe \cite{carvalho2009handling,uribe2022horseshoe} and neural network priors \cite{neal1996priors,asim2020invertible,li2021bayesian}.
The generalization to non-linear data models, hybrid-like MAP estimation \cite{calvetti2020sparsity,si2022path}, and other inference strategies will also be addressed.
The open problem of automatic selection of the hyper-prior parameters (also noted in \cite{glaubitz2022generalized,xiao2023sequential}) will be tackled in forthcoming works. 
The promising results in our parallel MRI example suggest the proposed MMV approach can also be helpful in applications. 
Future work will consider applications such as synthetic aperture radar (SAR) and electron tomography imaging. 
In both cases, previous investigations have demonstrated that compressive sensing techniques that exploit joint sparsity of MMVS are effective in recovering point estimates, \cite{sanders2017composite,scarnati2018joint}, suggesting that the MMV-IAS or MMV-GSBL might provide an effective Bayesian approach. 
Employing various sparse transform operators $R_1,\dots,R_L$ may be appropriate in this regard.
The case of changing sparsity profiles over time will also be considered, with promising initial results already reported for sequential signaling/imaging \cite{xiao2022sequential,xiao2023sequential}.

\appendix 
\section{Proof of \cref{thm:convexity}}
\label{app:convexity_proof}

In this section, we prove \cref{thm:convexity}. 
To this end, we first present two auxiliary lemmas. 

\begin{lemma}\label{lem:derivatives} 
	Let $r \in \R\setminus\{0\}$ and $\beta, \vartheta_k > 0$ for $k=1,\dots,K$, and denote $\eta = r \beta - (L/2 + 1)$. 
	The objective function $\mathcal{G}$ in \cref{eq:G} has the following second derivatives: 
	\begin{equation}\label{eq:derivatives}
	\begin{aligned}
		\nabla_{\mathbf{x}_l} \nabla_{\mathbf{x}_m} \, \mathcal{G} 
			& = \delta_{l,m} \left( F_l^T F_l + R^T D_{\boldsymbol{\theta}}^{-1} R \right), \\ 
		\left[ \nabla_{\boldsymbol{\theta}} \nabla_{\boldsymbol{\theta}} \, \mathcal{G} \right]_{j,k} 
			& = \delta_{j,k} \left( \theta_k^{-3} \left( \sum_{l=1}^L [R \mathbf{x}_l]_k^2 \right) + \theta_k^{r-2} \left( \frac{r(r-1)}{\vartheta_k^r} \right) + \theta_k^{-2} \eta \right), \\ 
		\left[ \nabla_{\boldsymbol{\theta}} \nabla_{\mathbf{x}_l} \, \mathcal{G} \right]_{k,n}  
			& = - \theta_k^{-2} [R]_{k,n} [ R \mathbf{x}_l ]_k, 
	\end{aligned}
	\end{equation}	
	for $j,k=1,\dots,K$ and $n=1,\dots,N$. 
	Here, $\delta_{l,m}$ is the usual Kronecker delta with $\delta_{l,m} = 1$ if $l=m$ and $\delta_{l,m} = 0$ otherwise.
\end{lemma}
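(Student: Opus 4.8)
The plan is to compute each of the three stated second-derivative blocks by direct differentiation, exploiting the fact that only certain groups of terms in $\mathcal{G}$ depend on each variable. First I would rewrite the prior term in the fully separable form $\| D_{\boldsymbol{\theta}}^{-1/2} R \mathbf{x}_l \|_2^2 = \sum_{k=1}^K \theta_k^{-1} [R \mathbf{x}_l]_k^2$, so that its dependence on $\boldsymbol{\theta}$ is entrywise and its gradient in $\mathbf{x}_l$ is obtained via the chain rule through $R$. This reformulation is exactly what forces every Hessian block to be diagonal (hence the Kronecker deltas), so I would establish it at the outset.

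For the $\mathbf{x}$-$\mathbf{x}$ block, I would note that the $l$th summand of both the fidelity and prior terms depends only on $\mathbf{x}_l$, so $\nabla_{\mathbf{x}_l}\nabla_{\mathbf{x}_m}\mathcal{G}$ vanishes for $l \neq m$, producing the factor $\delta_{l,m}$. For $l = m$ the fidelity term $\frac{1}{2}\|F_l \mathbf{x}_l - \mathbf{y}_l\|_2^2$ contributes the Hessian $F_l^T F_l$, the quadratic form $\frac{1}{2}\mathbf{x}_l^T R^T D_{\boldsymbol{\theta}}^{-1} R \mathbf{x}_l$ contributes $R^T D_{\boldsymbol{\theta}}^{-1} R$, and the hyper-prior terms drop out since they are $\mathbf{x}$-independent; summing gives $F_l^T F_l + R^T D_{\boldsymbol{\theta}}^{-1} R$.

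For the $\boldsymbol{\theta}$-$\boldsymbol{\theta}$ block I would exploit separability across $k$: each of the three $\boldsymbol{\theta}$-dependent groups of terms --- the prior term $\frac{1}{2}\sum_l\sum_k \theta_k^{-1}[R\mathbf{x}_l]_k^2$, the power term $\sum_k(\theta_k/\vartheta_k)^r$, and the log term $-\eta\sum_k\log\theta_k$ --- is a sum over $k$ of a function of the single variable $\theta_k$. Hence the mixed partials for $j\neq k$ vanish and only the diagonal survives. Differentiating the three pieces twice in $\theta_k$ gives $\theta_k^{-3}\sum_l[R\mathbf{x}_l]_k^2$, $r(r-1)\theta_k^{r-2}\vartheta_k^{-r}$, and $\eta\theta_k^{-2}$ respectively, whose sum is the claimed diagonal entry. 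Finally, for the mixed block I would observe that the prior term is the only group coupling $\boldsymbol{\theta}$ and $\mathbf{x}$; differentiating $\frac{1}{2}\sum_k\theta_k^{-1}[R\mathbf{x}_l]_k^2$ once in $[\mathbf{x}_l]_n$ (chain rule through $R$, producing $[R]_{k,n}[R\mathbf{x}_l]_k$) and once in $\theta_k$ yields $-\theta_k^{-2}[R]_{k,n}[R\mathbf{x}_l]_k$.

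There is no genuine obstacle here: the computation is elementary once the separable form is in place. The only points requiring care are the index bookkeeping in the chain rule through $R$ for the mixed block and keeping the exponents and signs straight when differentiating $\theta_k^{-1}$ and $\theta_k^r$ twice; I would double-check those to confirm the $\theta_k^{-3}$, $\theta_k^{r-2}$, and $\theta_k^{-2}$ factors appear as stated.
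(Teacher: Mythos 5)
Your proposal is correct and takes essentially the same route as the paper's proof: both proceed by direct differentiation of $\mathcal{G}$, using the entrywise form $\| D_{\boldsymbol{\theta}}^{-1/2} R \mathbf{x}_l \|_2^2 = \sum_{k=1}^K \theta_k^{-1} [R \mathbf{x}_l]_k^2$ so that separability across $l$ and $k$ yields the Kronecker-delta structure, with the mixed block obtained by differentiating the prior term once through $R$ in $\mathbf{x}_l$ and once in $\theta_k$. The sign and exponent checks you flag ($\theta_k^{-3}$, $\theta_k^{r-2}$, $\theta_k^{-2}\eta$) are exactly the computations the paper carries out via the explicit first derivatives, so there is no gap.
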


\begin{proof} 
	Simple calculations show that the first derivatives are 
	\begin{equation}\label{eq:derivatives_proof1}
	\begin{aligned}
		\nabla_{\mathbf{x}_l} \, \mathcal{G} 
			& = F_l^T \left( F_l \mathbf{x}_l - \mathbf{y}_l \right) + R^T D_{\boldsymbol{\theta}}^{-1} R \mathbf{x}_l, \\ 
		\left[ \nabla_{\boldsymbol{\theta}} \, \mathcal{G} \right]_k 
			& = - \theta_k^{-2} \left( \sum_{l=1}^L [ R  \mathbf{x}_l]_k^2/2 \right) + \theta_k^{r-1} \left( \frac{r}{\vartheta_k^r} \right) - \theta_k^{-1} \eta,
	\end{aligned}
	\end{equation} 
	for $l=1,\dots,L$ and $k=1,\dots,K$.
	Next, we can conclude from \cref{eq:derivatives_proof1} that 
	\begin{equation}
	\begin{aligned}
		\nabla_{\mathbf{x}_l} \nabla_{\mathbf{x}_m} \, \mathcal{G} 
			& = \delta_{l,m} \left( F_l^T F_l + R^T D_{\boldsymbol{\theta}}^{-1} R \right), \\ 
		\left[ \nabla_{\boldsymbol{\theta}} \nabla_{\boldsymbol{\theta}} \, \mathcal{G} \right]_{j,k} 
			& = \delta_{j,k} \left( \theta_k^{-3} \left( \sum_{l=1}^L [R \mathbf{x}_l]_k^2 \right) + \theta_k^{r-2} \left( \frac{r(r-1)}{\vartheta_k^r} \right) + \theta_k^{-2} \eta \right),
	\end{aligned}
	\end{equation} 
	for $l,m = 1,\dots,L$ and $j,k=1,\dots,K$. 
	To determine the mixed derivatives $\nabla_{\boldsymbol{\theta}} \nabla_{\mathbf{x}_l} \, \mathcal{G}$, note that 
	\begin{equation} 
		\left[ R^T D_{\boldsymbol{\theta}}^{-1} R \mathbf{x}_l \right]_n 
			= \sum_{j=1}^K \left[ R^T \right]_{n,j} \left[ D_{\boldsymbol{\theta}}^{-1} R \mathbf{x}_l \right]_j 
			= \sum_{j=1}^K \left[ R \right]_{j,n} \theta_k^{-1} \left[ R \mathbf{x}_l \right]_j
	\end{equation} 
	for $n=1,\dots,N$ and $l=1,\dots,L$.
	Hence we obtain
	\begin{equation} 
	\begin{aligned}
		\left[ \nabla_{\boldsymbol{\theta}} \nabla_{\mathbf{x}_l} \, \mathcal{G} \right]_{k,n} 
			= \partial_{\theta_k} \left[ R^T D_{\boldsymbol{\theta}}^{-1} R \mathbf{x}_l \right]_n 
			= - \theta_k^{-2} [R]_{k,n} [ R \mathbf{x}_l ]_k
	\end{aligned}
	\end{equation} 
	for $k=1,\dots,K$ and $n=1,\dots,N$.
\end{proof} 

The next lemma provides a lower bound in terms of the Hessian of the objective function $G$, allowing us to investigate its convexity.

\begin{lemma}\label{lem:Hessian}
	Let $r \in \R\setminus\{0\}$ and $\beta, \vartheta_k > 0$ for $k=1,\dots,K$. 
	Moreover, let 
	\begin{equation}\label{eq:Hessian}
		H = H( \mathbf{x}_{1:L}, \boldsymbol{\theta} ) = 
		\begin{bmatrix} 
			\nabla_{\mathbf{x}_{1:L}} \nabla_{\mathbf{x}_{1:L}} \, \mathcal{G} & \nabla_{\mathbf{x}_{1:L}} \nabla_{\boldsymbol{\theta}} \, \mathcal{G} \\ 
			\nabla_{\boldsymbol{\theta}} \nabla_{\mathbf{x}_{1:L}} \, \mathcal{G} & \nabla_{\boldsymbol{\theta}} \nabla_{\boldsymbol{\theta}} \, \mathcal{G}
		\end{bmatrix}
	\end{equation}
	be the Hessian of the objective function $\mathcal{G}$ in \cref{eq:G} and let $\mathbf{u} = [\mathbf{v}_{1:L}; \mathbf{w}]$ with $\mathbf{v}_l \in \R^N$, $l=1,\dots,L$, and $\mathbf{w} \in \R^K$. 
	Then, 
	\begin{equation}\label{eq:Hessian_ineq}
		\mathbf{u}^T H \mathbf{u} 
			\geq  \sum_{k=1}^K \theta_k^{-2} w_k^2 \left( \theta_k^{r} \left( \frac{r(r-1)}{\vartheta_k^r} \right) + \eta \right), 
	\end{equation} 
	where $\eta = r \beta - (L/2 + 1)$.
\end{lemma}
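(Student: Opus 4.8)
The plan is to expand the quadratic form $\mathbf{u}^T H \mathbf{u}$ using the block structure of $H$ recorded in \cref{lem:derivatives}, and then show that everything beyond the claimed right-hand side is manifestly nonnegative. Writing $\mathbf{u} = [\mathbf{v}_{1:L}; \mathbf{w}]$ and using the symmetry of $H$, I would split
\begin{equation*}
	\mathbf{u}^T H \mathbf{u}
		= \sum_{l=1}^L \mathbf{v}_l^T \left( \nabla_{\mathbf{x}_l} \nabla_{\mathbf{x}_l} \, \mathcal{G} \right) \mathbf{v}_l
			+ 2 \sum_{l=1}^L \mathbf{v}_l^T \left( \nabla_{\mathbf{x}_l} \nabla_{\boldsymbol{\theta}} \, \mathcal{G} \right) \mathbf{w}
			+ \mathbf{w}^T \left( \nabla_{\boldsymbol{\theta}} \nabla_{\boldsymbol{\theta}} \, \mathcal{G} \right) \mathbf{w},
\end{equation*}
where the $\mathbf{x}\mathbf{x}$-block is diagonal in $l$ by the Kronecker delta in \cref{eq:derivatives}. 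Substituting the three Hessian blocks, the first term becomes $\sum_l ( \| F_l \mathbf{v}_l \|_2^2 + \| D_{\boldsymbol{\theta}}^{-1/2} R \mathbf{v}_l \|_2^2 )$, while the cross and $\boldsymbol{\theta}\boldsymbol{\theta}$ terms are most cleanly written entrywise using the scalars $[R \mathbf{v}_l]_k$, $[R \mathbf{x}_l]_k$, and $w_k$.

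Next I would regroup the sum by the hyper-parameter index $k$. The data-independent diagonal part of the $\boldsymbol{\theta}\boldsymbol{\theta}$-block, namely $\sum_k w_k^2 ( \theta_k^{r-2} r(r-1)/\vartheta_k^r + \theta_k^{-2} \eta )$, is exactly the claimed lower bound once one factors out $\theta_k^{-2}$, so my target reduces to showing that the remaining contributions are nonnegative. These remaining contributions are the quantity $\sum_l \| F_l \mathbf{v}_l \|_2^2 \geq 0$, which I discard immediately, plus, for each $k$, the coupling form
\begin{equation*}
	Q_k
		= \theta_k^{-1} \sum_{l=1}^L [R \mathbf{v}_l]_k^2
			- 2 \theta_k^{-2} w_k \sum_{l=1}^L [R \mathbf{x}_l]_k [R \mathbf{v}_l]_k
			+ \theta_k^{-3} w_k^2 \sum_{l=1}^L [R \mathbf{x}_l]_k^2,
\end{equation*}
where the first piece comes from $\| D_{\boldsymbol{\theta}}^{-1/2} R \mathbf{v}_l \|_2^2$, the second from the mixed block, and the third from the data-dependent diagonal entry $\theta_k^{-3} \sum_l [R \mathbf{x}_l]_k^2$ of the $\boldsymbol{\theta}\boldsymbol{\theta}$-block.

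The crux is then to show $Q_k \geq 0$ for each $k$, and I would do this by completing the square in the variables $[R \mathbf{v}_l]_k$. Pulling out $\theta_k^{-1} > 0$ and completing the square term-by-term in $l$ gives
\begin{equation*}
	Q_k = \theta_k^{-1} \sum_{l=1}^L \left( [R \mathbf{v}_l]_k - \theta_k^{-1} w_k [R \mathbf{x}_l]_k \right)^2,
\end{equation*}
because the residual $-\theta_k^{-3} w_k^2 \sum_l [R \mathbf{x}_l]_k^2$ produced by completing the square cancels precisely against the third piece of $Q_k$. This exact cancellation is the one step to watch: it reflects that the data-dependent part of the Hessian assembles into a single positive semidefinite rank-structured block, leaving only the curvature from the hyper-prior. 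Since $\theta_k > 0$, each $Q_k$ is a sum of squares and hence nonnegative, and combining $\sum_l \| F_l \mathbf{v}_l \|_2^2 \geq 0$ with $\sum_k Q_k \geq 0$ yields \cref{eq:Hessian_ineq}. No genuine obstacle remains once the grouping-by-$k$ and the completing-the-square identity are set up correctly.
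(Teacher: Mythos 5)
Your proof is correct and follows essentially the same route as the paper's: expand $\mathbf{u}^T H \mathbf{u}$ via the Hessian blocks of \cref{lem:derivatives}, regroup by the hyper-parameter index $k$, and complete the square in the $[R\mathbf{v}_l]_k$ variables so that the data-dependent terms assemble into a nonnegative sum of squares, leaving exactly the hyper-prior curvature as the lower bound. Your bookkeeping is in fact slightly more careful than the paper's intermediate displays (the paper writes $\theta_k^{-2} w_k^2 [R\mathbf{x}_l]_k^2$ in \cref{eq:uTHu2} where the correct exponent is $\theta_k^{-3}$, as you have), but both arguments arrive at the same final sum-of-squares identity \cref{eq:uTHu3} and the same conclusion.
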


\begin{proof} 
	We start by noting that   
	\begin{equation}\label{eq:uTHu} 
		\mathbf{u}^T H \mathbf{u} 
			= \sum_{l=1}^L \mathbf{v}_l^T \left( \nabla_{\mathbf{x}_l} \nabla_{\mathbf{x}_l} \, \mathcal{G} \right) \mathbf{v}_l 
				+ 2 \sum_{l=1}^L \mathbf{w}^T \left( \nabla_{\boldsymbol{\theta}} \nabla_{\mathbf{x}_l} \, \mathcal{G} \right) \mathbf{v}_l 
				+ \mathbf{w}^T \left( \nabla_{\boldsymbol{\theta}} \nabla_{\boldsymbol{\theta}} \, \mathcal{G} \right) \mathbf{w}.
	\end{equation} 
	Substituting the second derivatives \cref{eq:derivatives} from \cref{lem:derivatives} yields  
	\begin{equation}\label{eq:ABC} 
	\begin{aligned}
		\mathbf{v}_l^T \left( \nabla_{\mathbf{x}_l} \nabla_{\mathbf{x}_l} \, \mathcal{G} \right) \mathbf{v}_l 
			& = \sum_{m=1}^M \left[ F_l \mathbf{v}_l \right]_m^2 + \sum_{k=1}^K \theta_k^{-1} \left[ R \mathbf{v}_l \right]_k^2, \\ 
		\mathbf{w}^T \left( \nabla_{\boldsymbol{\theta}} \nabla_{\mathbf{x}_l} \, \mathcal{G} \right) \mathbf{v}_l  
			& = - \sum_{k=1}^K \theta_k^{-2} w_k \left[ R \mathbf{x}_l \right]_k \left[ R \mathbf{v}_l \right]_k, \\ 
		\mathbf{w}^T \left( \nabla_{\boldsymbol{\theta}} \nabla_{\boldsymbol{\theta}} \, \mathcal{G} \right) \mathbf{w} 
			& = \sum_{k=1}^K w_k^2 \left( \theta_k^{-3} \left( \sum_{l=1}^L [R \mathbf{x}_l]_k^2 \right) + \theta_k^{r-2} \left( \frac{r(r-1)}{\vartheta_k^r} \right) + \theta_k^{-2} \eta \right).
	\end{aligned} 
	\end{equation} 
	Furthermore, substituting \cref{eq:ABC} into \cref{eq:uTHu} we obtain  
	\begin{equation}\label{eq:uTHu2} 
	\begin{aligned}
		\mathbf{u}^T H \mathbf{u} 
			= & \sum_{l=1}^L \sum_{m=1}^M \left[ F_l \mathbf{v}_l \right]_m^2 \\
			& + \sum_{l=1}^L \sum_{k=1}^K \left( \theta_k^{-1} \left[ R \mathbf{v}_l \right]_k^2 - 2 \theta_k^{-2} \left[ R \mathbf{v}_l \right]_k w_k \left[ R \mathbf{x}_l \right]_k + \theta_k^{-2} w_k^2 \left[ R \mathbf{x}_l \right]_k^2 \right) \\ 
			& + \sum_{k=1}^K \theta_k^{-2} w_k^2 \left( \theta_k^{r} \left( \frac{r(r-1)}{\vartheta_k^r} \right) + \eta \right). 
	\end{aligned}
	\end{equation} 
	Note that $\theta_k^{-2} \left[ R \mathbf{v}_l \right]_k w_k \left[ R \mathbf{x}_l \right]_k + \theta_k^{-2} w_k^2 \left[ R \mathbf{x}_l \right]_k^2 = \theta_k^{-3} \left( \theta_k \left[ R \mathbf{v}_l \right]_k - w_k \left[ R \mathbf{x}_l \right]_k \right)^2$. 
	Hence, we can rewrite \cref{eq:uTHu2} as 
	\begin{equation}\label{eq:uTHu3} 
	\begin{aligned}
		\mathbf{u}^T H \mathbf{u} 
			= & \sum_{l=1}^L \sum_{m=1}^M \left[ F_l \mathbf{v}_l \right]_m^2 
			+ \sum_{l=1}^L \sum_{k=1}^K \theta_k^{-3} \left( \theta_k \left[ R \mathbf{v}_l \right]_k - w_k \left[ R \mathbf{x}_l \right]_k \right)^2 \\ 
			& + \sum_{k=1}^K \theta_k^{-2} w_k^2 \left( \theta_k^{r} \left( \frac{r(r-1)}{\vartheta_k^r} \right) + \eta \right). 
	\end{aligned}
	\end{equation} 
	Finally, note that the first two sums on the right-hand side of \cref{eq:uTHu3} are non-negative, which yields the assertion.
\end{proof} 

We are now positioned to prove \cref{thm:convexity}. 

\begin{proof}[Proof of \cref{thm:convexity}]
	Recall that $\mathcal{G}$ is convex if and only if its Hessian $H$ satisfies $\mathbf{u}^T H \mathbf{u} \geq 0$ for all $\mathbf{u} = [\mathbf{v}_{1:L}; \mathbf{w}]$. 
	Let $\mathbf{u} = [\mathbf{v}_{1:L}; \mathbf{w}]$, then \cref{lem:Hessian} implies 
	\begin{equation}\label{eq:convexity_proof1}
		\mathbf{u}^T H \mathbf{u} 
			\geq  \sum_{k=1}^K \theta_k^{-2} w_k^2 \left( \theta_k^{r} \left( \frac{r(r-1)}{\vartheta_k^r} \right) + \eta \right).
	\end{equation} 
	The right-hand side of \cref{eq:convexity_proof1} is positive if 
	\begin{equation}\label{eq:convexity_proof2}
		\theta_k^{r} \left( \frac{r(r-1)}{\vartheta_k^r} \right) > -\eta, \quad k=1,\dots,K.
	\end{equation} 
	The proof for the different cases follows by enforcing condition \cref{eq:convexity_proof2}.
\end{proof}

\section*{Acknowledgements}
This work was partially supported by  AFOSR \#F9550-22-1-0411, DOD (ONR MURI) \#N00014-20-1-2595, DOE ASCR \#DE-ACO5-000R22725, and NSF DMS \#1912685.

\bibliographystyle{siamplain}
\bibliography{literature}

\begin{thebibliography}{10}

\bibitem{adcock2019joint}
{\sc B.~Adcock, A.~Gelb, G.~Song, and Y.~Sui}, {\em Joint sparse recovery based
  on variances}, SIAM Journal on Scientific Computing, 41 (2019),
  pp.~A246--A268.

\bibitem{asim2020invertible}
{\sc M.~Asim, M.~Daniels, O.~Leong, A.~Ahmed, and P.~Hand}, {\em Invertible
  generative models for inverse problems: mitigating representation error and
  dataset bias}, in International Conference on Machine Learning, PMLR, 2020,
  pp.~399--409.

\bibitem{babacan2010sparse}
{\sc S.~D. Babacan, R.~Molina, and A.~K. Katsaggelos}, {\em Sparse {B}ayesian
  image restoration}, in 2010 IEEE International Conference on Image
  Processing, IEEE, 2010, pp.~3577--3580.

\bibitem{beck2017first}
{\sc A.~Beck}, {\em First-Order Methods in Optimization}, SIAM, 2017.

\bibitem{beck2009fast}
{\sc A.~Beck and M.~Teboulle}, {\em A fast iterative shrinkage-thresholding
  algorithm for linear inverse problems}, SIAM Journal on Imaging Sciences, 2
  (2009), pp.~183--202.

\bibitem{calvetti2015hierarchical}
{\sc D.~Calvetti, A.~Pascarella, F.~Pitolli, E.~Somersalo, and B.~Vantaggi},
  {\em A hierarchical {K}rylov--{B}ayes iterative inverse solver for {MEG} with
  physiological preconditioning}, Inverse Problems, 31 (2015), p.~125005.

\bibitem{calvetti2018bayes}
{\sc D.~Calvetti, F.~Pitolli, E.~Somersalo, and B.~Vantaggi}, {\em Bayes meets
  {K}rylov: Statistically inspired preconditioners for {CGLS}}, SIAM Review, 60
  (2018), pp.~429--461.

\bibitem{calvetti2020sparsity}
{\sc D.~Calvetti, M.~Pragliola, and E.~Somersalo}, {\em Sparsity promoting
  hybrid solvers for hierarchical {B}ayesian inverse problems}, SIAM Journal on
  Scientific Computing, 42 (2020), pp.~A3761--A3784.

\bibitem{calvetti2020sparse}
{\sc D.~Calvetti, M.~Pragliola, E.~Somersalo, and A.~Strang}, {\em Sparse
  reconstructions from few noisy data: analysis of hierarchical {B}ayesian
  models with generalized gamma hyperpriors}, Inverse Problems, 36 (2020),
  p.~025010.

\bibitem{calvetti2007gaussian}
{\sc D.~Calvetti and E.~Somersalo}, {\em A {G}aussian hypermodel to recover
  blocky objects}, Inverse Problems, 23 (2007), p.~733.

\bibitem{calvetti2007introduction}
{\sc D.~Calvetti and E.~Somersalo}, {\em An Introduction to Bayesian Scientific
  Computing: Ten Lectures on Subjective Computing}, vol.~2, Springer Science \&
  Business Media, 2007.

\bibitem{calvetti2023computationally}
{\sc D.~Calvetti and E.~Somersalo}, {\em Computationally efficient sampling
  methods for sparsity promoting hierarchical {B}ayesian models}, arXiv
  preprint arXiv:2303.16988,  (2023).

\bibitem{calvetti2019hierachical}
{\sc D.~Calvetti, E.~Somersalo, and A.~Strang}, {\em Hierachical {B}ayesian
  models and sparsity: $\ell_2$-magic}, Inverse Problems, 35 (2019), p.~035003.

\bibitem{candes2008enhancing}
{\sc E.~J. Candes, M.~B. Wakin, and S.~P. Boyd}, {\em Enhancing sparsity by
  reweighted $\ell_1$ minimization}, Journal of Fourier Analysis and
  Applications, 14 (2008), pp.~877--905.

\bibitem{carvalho2009handling}
{\sc C.~M. Carvalho, N.~G. Polson, and J.~G. Scott}, {\em Handling sparsity via
  the horseshoe}, in Artificial Intelligence and Statistics, PMLR, 2009,
  pp.~73--80.

\bibitem{chantas2006bayesian}
{\sc G.~K. Chantas, N.~P. Galatsanos, and A.~C. Likas}, {\em Bayesian
  restoration using a new nonstationary edge-preserving image prior}, IEEE
  Transactions on Image Processing, 15 (2006), pp.~2987--2997.

\bibitem{chartrand2008iteratively}
{\sc R.~Chartrand and W.~Yin}, {\em Iteratively reweighted algorithms for
  compressive sensing}, in 2008 IEEE International Conference on Acoustics,
  Speech and Signal Processing, IEEE, 2008, pp.~3869--3872.

\bibitem{chun2017compressed}
{\sc I.~Y. Chun and B.~Adcock}, {\em Compressed sensing and parallel
  acquisition}, IEEE Transactions on Information Theory, 63 (2017),
  pp.~4860--4882.

\bibitem{chun2015efficient}
{\sc I.~Y. Chun, B.~Adcock, and T.~M. Talavage}, {\em Efficient compressed
  sensing {SENSE pMRI} reconstruction with joint sparsity promotion}, IEEE
  Transactions on Medical Imaging, 35 (2015), pp.~354--368.

\bibitem{churchill2019detecting}
{\sc V.~Churchill and A.~Gelb}, {\em Detecting edges from non-uniform {F}ourier
  data via sparse {B}ayesian learning}, Journal of Scientific Computing, 80
  (2019), pp.~762--783.

\bibitem{cotter2005sparse}
{\sc S.~F. Cotter, B.~D. Rao, K.~Engan, and K.~Kreutz-Delgado}, {\em Sparse
  solutions to linear inverse problems with multiple measurement vectors}, IEEE
  Transactions on Signal Processing, 53 (2005), pp.~2477--2488.

\bibitem{daubechies2010iteratively}
{\sc I.~Daubechies, R.~DeVore, M.~Fornasier, and C.~S. G{\"u}nt{\"u}rk}, {\em
  Iteratively reweighted least squares minimization for sparse recovery},
  Communications on Pure and Applied Mathematics: A Journal Issued by the
  Courant Institute of Mathematical Sciences, 63 (2010), pp.~1--38.

\bibitem{donoho2006compressed}
{\sc D.~L. Donoho}, {\em Compressed sensing}, IEEE Transactions on Information
  Theory, 52 (2006), pp.~1289--1306.

\bibitem{eldar2012compressed}
{\sc Y.~C. Eldar and G.~Kutyniok}, {\em Compressed Sensing: Theory and
  Applications}, Cambridge University Press, 2012.

\bibitem{eldar2009robust}
{\sc Y.~C. Eldar and M.~Mishali}, {\em Robust recovery of signals from a
  structured union of subspaces}, IEEE Transactions on Information Theory, 55
  (2009), pp.~5302--5316.

\bibitem{evensen2009data}
{\sc G.~Evensen}, {\em Data Assimilation: The Ensemble Kalman Filter}, vol.~2,
  Springer, 2009.

\bibitem{foucart2017mathematical}
{\sc S.~Foucart and H.~Rauhut}, {\em A mathematical introduction to compressive
  sensing}, Bull. Am. Math, 54 (2017), pp.~151--165.

\bibitem{gelb2019reducing}
{\sc A.~Gelb and T.~Scarnati}, {\em Reducing effects of bad data using variance
  based joint sparsity recovery}, Journal of Scientific Computing, 78 (2019),
  pp.~94--120.

\bibitem{glaubitz2022generalized}
{\sc J.~Glaubitz, A.~Gelb, and G.~Song}, {\em Generalized sparse {B}ayesian
  learning and application to image reconstruction}, SIAM/ASA Journal on
  Uncertainty Quantification, 11 (2023), pp.~262--284.

\bibitem{guerquin2011realistic}
{\sc M.~Guerquin-Kern, L.~Lejeune, K.~P. Pruessmann, and M.~Unser}, {\em
  Realistic analytical phantoms for parallel magnetic resonance imaging}, IEEE
  Transactions on Medical Imaging, 31 (2011), pp.~626--636.

\bibitem{kaipio2006statistical}
{\sc J.~Kaipio and E.~Somersalo}, {\em Statistical and Computational Inverse
  Problems}, vol.~160, Springer Science \& Business Media, 2006.

\bibitem{kim2022hierarchical}
{\sc H.~Kim, D.~Sanz-Alonso, and A.~Strang}, {\em Hierarchical ensemble
  {K}alman methods with sparsity-promoting generalized gamma hyperpriors},
  Foundations of Data Science, 5 (2023), pp.~366--388.

\bibitem{li2021bayesian}
{\sc C.~Li, M.~Dunlop, and G.~Stadler}, {\em Bayesian neural network priors for
  edge-preserving inversion}, Inverse Problems and Imaging, 16 (2022),
  pp.~1229--1254.

\bibitem{neal1996priors}
{\sc R.~M. Neal}, {\em Priors for infinite networks}, in Bayesian Learning for
  Neural Networks, Springer, 1996, pp.~29--53.

\bibitem{owen2013monte}
{\sc A.~B. Owen}, {\em Monte Carlo Theory, Methods and Examples}, Stanford,
  2013.

\bibitem{saad2003iterative}
{\sc Y.~Saad}, {\em Iterative Methods for Sparse Linear Systems}, SIAM, 2003.

\bibitem{sanders2017composite}
{\sc T.~Sanders, A.~Gelb, and R.~B. Platte}, {\em Composite {SAR} imaging using
  sequential joint sparsity}, Journal of Computational Physics, 338 (2017),
  pp.~357--370.

\bibitem{scarnati2018joint}
{\sc T.~Scarnati and A.~Gelb}, {\em Joint image formation and two-dimensional
  autofocusing for synthetic aperture radar data}, Journal of Computational
  Physics, 374 (2018), pp.~803--821.

\bibitem{scarnati2020accurate}
{\sc T.~Scarnati and A.~Gelb}, {\em Accurate and efficient image reconstruction
  from multiple measurements of {F}ourier samples}, Journal of Computational
  Mathematics, 38 (2020), p.~797.

\bibitem{si2022path}
{\sc Z.~Si, Y.~Liu, and A.~Strang}, {\em Path-following methods for {M}aximum a
  {P}osteriori estimators in {B}ayesian hierarchical models: How estimates
  depend on hyperparameters}, arXiv preprint arXiv:2211.07113,  (2022).

\bibitem{spantini2022coupling}
{\sc A.~Spantini, R.~Baptista, and Y.~Marzouk}, {\em Coupling techniques for
  nonlinear ensemble filtering}, SIAM Review, 64 (2022), pp.~921--953.

\bibitem{stuart2010inverse}
{\sc A.~M. Stuart}, {\em Inverse problems: a {B}ayesian perspective}, Acta
  Numerica, 19 (2010), pp.~451--559.

\bibitem{tikhonov2013numerical}
{\sc A.~N. Tikhonov, A.~Goncharsky, V.~Stepanov, and A.~G. Yagola}, {\em
  Numerical Methods for the Solution of Ill-Posed Problems}, vol.~328 of
  Mathematics and Its Applications, Springer Science \& Business Media, 2013.

\bibitem{tipping2001sparse}
{\sc M.~E. Tipping}, {\em Sparse {B}ayesian learning and the relevance vector
  machine}, Journal of Machine Learning Research, 1 (2001), pp.~211--244.

\bibitem{uribe2022horseshoe}
{\sc F.~Uribe, Y.~Dong, and P.~C. Hansen}, {\em Horseshoe priors for
  edge-preserving linear {B}ayesian inversion}, SIAM Journal on Scientific
  Computing, 45 (2023), pp.~B337--B365.

\bibitem{vono2022high}
{\sc M.~Vono, N.~Dobigeon, and P.~Chainais}, {\em High-dimensional {G}aussian
  sampling: a review and a unifying approach based on a stochastic proximal
  point algorithm}, SIAM Review, 64 (2022), pp.~3--56.

\bibitem{wipf2004sparse}
{\sc D.~P. Wipf and B.~D. Rao}, {\em Sparse {B}ayesian learning for basis
  selection}, IEEE Transactions on Signal processing, 52 (2004),
  pp.~2153--2164.

\bibitem{wipf2007empirical}
{\sc D.~P. Wipf and B.~D. Rao}, {\em An empirical {B}ayesian strategy for
  solving the simultaneous sparse approximation problem}, IEEE Transactions on
  Signal Processing, 55 (2007), pp.~3704--3716.

\bibitem{wright2015coordinate}
{\sc S.~J. Wright}, {\em Coordinate descent algorithms}, Mathematical
  Programming, 151 (2015), pp.~3--34.

\bibitem{xiao2023sequential2}
{\sc Y.~Xiao, A.~Gelb, and G.~Song}, {\em Sequential edge detection using joint
  hierarchical {B}ayesian learning}, arXiv preprint arXiv:2302.14247,  (2023).

\bibitem{xiao2023sequential}
{\sc Y.~Xiao and J.~Glaubitz}, {\em Sequential image recovery using joint
  hierarchical {B}ayesian learning}, Journal of Scientific Computing, 96
  (2023), p.~4.

\bibitem{xiao2022sequential}
{\sc Y.~Xiao, J.~Glaubitz, A.~Gelb, and G.~Song}, {\em Sequential image
  recovery from noisy and under-sampled {F}ourier data}, Journal of Scientific
  Computing, 91 (2022), p.~79.

\bibitem{zhang2022empirical}
{\sc J.~Zhang, A.~Gelb, and T.~Scarnati}, {\em Empirical {B}ayesian inference
  using a support informed prior}, SIAM/ASA Journal on Uncertainty
  Quantification, 10 (2022), pp.~745--774.

\end{thebibliography}

\end{document}